\documentclass[11pt,twoside]{article}

\usepackage{fullpage}

\usepackage{amsmath,amssymb,fullpage,graphicx,xcolor,mathtools,nicefrac,comment}
\usepackage[shortlabels]{enumitem}

\usepackage{amssymb,amsmath,comment}
\usepackage[most]{tcolorbox}
\usepackage{subfig}

\usepackage{amsthm}
\makeatletter
\def\th@plain{%
  \thm@notefont{}
  \itshape 
}
\def\th@definition{%
  \thm@notefont{}
  \normalfont 
}
\makeatother

\usepackage[shortlabels]{enumitem}

\usepackage{thmtools,thm-restate}

\usepackage{hyperref}

\usepackage{amsmath,amssymb}
\usepackage{verbatim,float,url}
\usepackage{graphicx,psfrag}
\usepackage[numbers]{natbib}
\usepackage{xcolor}
\usepackage{microtype}
\usepackage{xparse}
\usepackage{xspace}
\usepackage{tikz,textcomp,thmtools,nameref,cleveref}
\usetikzlibrary{positioning}
\usepackage{tikz-qtree}

\usepackage{algpseudocode,algorithmicx,algorithm}

\usepackage[font=small]{caption}


\newtheorem{definition}{Definition}
\newtheorem{theorem}{Theorem}

\newtheorem{counterexample}{Counterexample}

\newtheorem{dhruvprop}{Proposition}
\newtheorem{dhruvdef}[theorem]{Definition}
\newtheorem{dhruvthm}{Theorem}
\newtheorem{dhruvlemma}{Lemma}

\DeclareMathOperator*{\argmax}{arg\,max}
\DeclareMathOperator*{\argmin}{arg\,min}

\newcommand{\qeddhruv}{$\blacksquare$}

\newcommand{\compclass}{\mathcal{C}_T}
\newcommand{\bigo}{\widetilde{\mathcal{O}}}
\newcommand{\mem}{m}
\newcommand{\numact}{K}
\newcommand{\ActSet}{\mathcal{X}}
\newcommand{\pReg}{\mathcal{R}^{\text{pol}}_{\compclass}}
\newcommand{\cpReg}{\mathcal{R}^{\text{cp}}_T}
\newcommand{\tReg}{\mathcal{R}^{\text{trad}}_{\compclass}}
\newcommand{\E}{\mathbb{E}}
\newcommand{\bigtheta}{\widetilde{\Theta}}
\newcommand{\bigomega}{\widetilde{\Omega}}
\newcommand{\compconstant}{\ActSet_{\text{const}}}
\newcommand{\pRegconst}{\mathcal{R}^{\text{pol}}_{\compconstant}}
\newcommand{\tRegconst}{\mathcal{R}^{\text{trad}}_{\compconstant}}



\begin{document}

\begin{center}
{\bf{\LARGE{Complete Policy Regret Bounds for Tallying Bandits}}}


\vspace*{.2in}

{\large{
\begin{tabular}{cccc}
Dhruv Malik & Yuanzhi Li & Aarti Singh
\end{tabular}
}}
\vspace*{.2in}

\begin{tabular}{c}
Machine Learning Department \\
Carnegie Mellon University
\end{tabular}

\vspace*{.2in}

\today

\end{center}
\vspace*{.2in}

\begin{abstract}
Policy regret is a well established notion of measuring the performance of an online learning algorithm against an adaptive adversary. We study restrictions on the adversary that enable efficient minimization of the \emph{complete policy regret}, which is the strongest possible version of policy regret. We identify a gap in the current theoretical understanding of what sorts of restrictions permit tractability in this challenging setting. To resolve this gap, we consider a generalization of the stochastic multi armed bandit, which we call the \emph{tallying bandit}. This is an online learning setting with an $m$-memory bounded adversary, where the average loss for playing an action is an unknown function of the number (or tally) of times that the action was played in the last $m$ timesteps. For tallying bandit problems with $\numact$ actions and time horizon $T$, we provide an algorithm that w.h.p achieves a complete policy regret guarantee of $\bigo ( m \numact \sqrt{T} )$, where the $\bigo$ notation hides only logarithmic factors. We additionally prove an $\bigomega(\sqrt{ m \numact T})$ lower bound on the expected complete policy regret of any tallying bandit algorithm, demonstrating the near optimality of our method.
    \end{abstract}

\section{Introduction}
When decision making algorithms are deployed in the real world, the reward associated with choosing a decision is rarely static. Instead, an algorithm's decision impacts the state of its environment, which in turn influences the quality of that same decision in the future. For instance, in recommender systems such as YouTube and Netflix, the choice to recommend a type of content is often instrumental in shaping the preferences of the user for that content genre. This creates a feedback loop between an algorithm and its environment, and results in a complex back and forth interaction.

Such dynamic and interactive settings are well modeled as online learning problems, where a player competes against an adaptive adversary. To measure the performance of the player, most of the literature on online learning has focused on a performance metric called the \emph{traditional regret}~\citep{auer02, flaxman05, abernethy08, hazan12}. However, a significant line of work has established that when the adversary is adaptive, the traditional regret is a poor indicator of the performance of an algorithm~\citep{mehrav02, arora12, cesa-bianchi13, heidari16, lindner21}. Instead, one typically opts for a stronger performance metric, known as \emph{policy regret}. The policy regret accumulated over a time horizon $T$ is defined with respect to a competitor class $\compclass$ of deterministic policies (or length $T$ action sequences). The policy regret with respect to $\compclass$, which we denote $\pReg$, compares the algorithm's cumulative loss to that of the best policy in $\compclass$.


Much of the prior work on policy regret has focused on the restrictive assumption that $\compclass$ contains only those action sequences that repeatedly play the same action~\citep{arora12, cesa-bianchi13, dekel14, arora18}. We allude to the policy regret with this choice of $\compclass$ as the \emph{constant action policy regret}. The strongest possible version of policy regret is when $\compclass$ is the complete policy class (i.e., the set of all deterministic policies), and we abbreviate this as the \emph{complete policy regret}. This challenging setting has recently received attention from the online learning community~\citep{heidari16, seznec20, lindner21}. On the other hand, this performance metric is equivalent to the one that is standard in the closely related field of reinforcement learning, where a vast literature explores how to efficiently maximize cumulative reward~\citep{kearns99generative, sutton00, kakade03, jin18, wang20, malik21}.

Unfortunately, prior work has shown that without restrictions on the adversary, obtaining non-trivial guarantees on even the constant action policy regret is impossible~\citep{arora12}. Hence, to attain meaningful guarantees on the complete policy regret (CPR), it is necessary to restrict the adversary. Prior literature on policy regret studies different competitor classes $\compclass$, along with varying types of restrictions on the adversary, to demonstrate non-trivial guarantees on the corresponding policy regret $\pReg$. We comprehensively survey these restrictions, and identify a gap in the current theoretical understanding of when it is possible to attain meaningful guarantees on CPR. To resolve this gap, we make the following contributions:
\begin{itemize}
\item We introduce an online learning setting known as the \emph{tallying bandit}. Here the average loss for playing an action is a function of the number (or tally) of times that action was played in the last $m$ timesteps. The stochastic multi armed bandit (sMAB) is a special case of the tallying bandit, via a choice of $m=1$. From a more practical angle, we view the tallying bandit as a step towards handling feedback loops that arise in applications such as recommender systems.
\item For tallying bandit problems with $\numact$ actions and time horizon $T$, we provide an algorithm, that given any $\delta \in (0,1)$, achieves with probability at least $1 - \delta$ a complete policy regret guarantee of $\bigo \left( m \numact \sqrt{T} \log \left( T, m, \numact, 1/\delta \right) \right)$.
\item We complement our algorithmic development with an $\bigomega(\sqrt{\mem \numact T})$ minimax lower bound on the expected complete policy regret of any method designed for tallying bandits. This demonstrates the near optimality of our algorithm.
\end{itemize}

\section{Problem Formulation}
\subsection{Online Learning \& Complete Policy Regret}
We begin by providing a generic formulation of online learning against adaptive adversaries, following rather closely the description of Arora et al.~\citep{arora12}. An online learning problem with time horizon $T$ and action set $\ActSet$ is an iterative game between a player and an adaptive adversary. Throughout, we let $\numact$ denote the cardinality of $\ActSet$. Before the game begins, the adversary fixes a sequence of history dependent loss functions $\{ f_t \}_{t=1}^T$, where $f_t$ maps $\ActSet^t$ to the interval $[0,1]$. At each timestep $t$ of the game, the player chooses an action $a_t \in \ActSet$. In the bandit feedback model, the player then observes the loss value $f_t(a_{1:t})$, where we have used $a_{1:t}$ as shorthand for $a_1, a_2 \dots a_t$. By contrast, in the full information model, the player observes $f_t(a_{1:t-1}, x)$ for all $x \in \ActSet$.

The cumulative loss experienced by the player during this game is $\sum_{t=1}^T f_t(a_{1:t})$. Note that this is a random variable, since the player's strategy can be random. In order to evaluate the performance of the player, we compare this cumulative loss to a baseline. In particular, we let $\compclass \subseteq \ActSet^T$ be a competitor class of policies (or length $T$ action sequences). Given some $\compclass$, one typically measures the player's performance via either the \emph{policy regret}, which we denote $\pReg$, or the \emph{traditional regret}, which we denote $\tReg$. The policy regret~\citep{arora12, cesa-bianchi13, arora18} is defined as
\begin{equation}
\label{eqn:policy_regret_main}
\pReg = \sum_{t=1}^T f_t(a_{1:t}) - \min_{(y_1, y_2 \dots y_T) \in \compclass} \sum_{t=1}^T f_t(y_1, y_2 \dots y_t).
\end{equation}
Notably, this definition differs substantially from the traditional regret~\citep{auer02, flaxman05, abernethy08, hazan12}, given by
\begin{equation}
\label{eqn:traditional_regret_main}
\tReg =  \sum_{t=1}^T f_t(a_{1:t}) - \min_{(y_1, y_2 \dots y_T) \in \compclass} \sum_{t=1}^T f_t(a_{1:t-1}, y_t).
\end{equation}
\noindent In the aforementioned adversarial online learning setup, the traditional regret lacks meaningful interpretation. Instead, one opts for the policy regret to measure the player's performance. We refer the interested reader to Arora et al.~\citep{arora12} for further details on the motivation for this choice.

Let $\compconstant$ denote the set of constant action sequences, so that $\compconstant = \{ (x, x \dots x) \text{ s.t. } x \in \ActSet \}$. We refer to $\pRegconst$ and $\tRegconst$ respectively as the \emph{constant action policy regret} and \emph{constant action traditional regret}. The constant action policy regret (and hence the constant action traditional regret) yields a weak measure of performance, since we are comparing the player's performance to a very restricted baseline policy class. Expanding the competitor class $\compclass$ yields stronger notions of performance. In our paper, we are interested in the challenging setting where $\compclass$ is the complete policy class (the set of all length $T$ action sequences), or equivalently where $\compclass = \ActSet^T$. This choice of $\compclass$ in Eq.~\eqref{eqn:policy_regret_main} yields the strongest version of policy regret, and we refer to it as \emph{complete policy regret (CPR)}, and denote it by $\cpReg$. An \emph{optimal policy} is a policy in $\ActSet^T$ that minimizes the cumulative loss. We will often use the terminology ``efficiently minimize the CPR'', which means to obtain a CPR bound that is sublinear in $T$ and at most polynomial in all other problem dependent parameters. Our exclusive focus is on statistical (rather than computational) efficiency.

\subsection{Restricting The Adversary}
\noindent Prior work due to Arora et al.~\citep{arora12} has shown that without any restrictions on the adversary, and even when $\compclass = \compconstant$, for any player there exists an adversary such that the player's constant action policy regret satisfies $\pRegconst = \bigomega(T)$. To prove this lower bound, Arora et al.~\citep{arora12} construct an adversary that is wholly unrestricted and hence extremely powerful. Thus, to obtain non-trivial upper bounds on even the constant action policy regret, it is necessary to weaken the adversary. One natural type of restriction that has been well studied in prior work, is to restrict the memory of the adversary~\citep{arora12, cesa-bianchi13, arora18}.
\begin{dhruvdef}
We say that an adversary is $m$-memory bounded if for all $t \geq m$, all $a_{1:t} \in \ActSet^t$, all $a'_{1:t-m} \in \ActSet^{t-m}$ and all $f_t$ we have that
$$
f_t(a_{1:t}) = f_t(a'_1, a'_2 \dots a'_{t-m}, a_{t-m+1} \dots a_t ).
$$
\end{dhruvdef}
Hence, an $m$-memory bounded adversary is only permitted to define its loss function based on the player's most recent $m$ actions. Prior work has shown that when $m$ is sublinear in $T$ and $\compclass$ is sufficiently restricted (informally, when $\compclass$ equals or is only slightly larger than $\compconstant$), then the player can achieve policy regret $\pReg$ that is sublinear in $T$~\citep{arora12, cesa-bianchi13, dekel14, arora18}.

When $m=1$, then the adversary is oblivious, and we overload notation and write $f_t(a_{1:t}) = f_t(a_t)$. Notably, for a fixed $\compclass$, the policy regret $\pReg$ equals the traditional regret $\tReg$ in this scenario. It is well known that against an oblivious adversary, a player can achieve sublinear constant action policy regret~\citep{arora12}. Hence, it is natural to question whether a player can achieve sublinear CPR, when the adversary is oblivious. We show via the following counterexample that this is impossible. Note that the result of the counterexample holds even in the full information feedback model (as opposed to just bandit feedback). A similar result is given by Mohri and Yang~\citep{mohri18}.

\begin{counterexample}
\label{counterexample1}
Let $\ActSet = \{ x_1, x_2 \}$. To define its sequence of loss functions, the adversary first samples a bit string $b$ uniformly at random from $\{ 0, 1 \}^T$. For each $t$, it then defines
$$
f_t(x_1) = b_t \text{ and } f_t(x_2) = 1 - b_t.
$$
Attaining sublinear complete policy regret is then equivalent to making a sublinear number of mistakes when guessing the value of $b_t$. Since this is impossible, we have that $\E \left[ \cpReg \right] = \bigomega(T)$, where the expectation is over the sampling of $b$ and the player's (possibly randomized) strategy.
\end{counterexample}

\noindent Crucially, the above counterexample relies on the fact that even though the adversary is oblivious, its loss functions $f_t, f_{t'}$ for $t \neq t'$ are time varying and are constructed independent of each other. In this scenario, the player cannot predict anything about $f_t$ via knowledge of $f_{t'}$ for $t' < t$. To evade such counterexamples, a different type of restriction on the adversary's power is to ensure that some knowledge of $f_{t'}$ leaks some information about $f_t$. This motivates the following definition. 

\begin{dhruvdef}
An $m$-memory bounded adversary is said to be $g$-restricted if the following is true. For each action $x \in \ActSet$, there exists a base function $g_x: \cup_{m'=1}^m \ActSet^{m'} \to [0,1]$, such that
$$
f_t(a_{1:t}) \equiv f_t(a_{\max \{ 1, t-m+1 \}:t}) = g_{a_t}(a_{\max \{ 1, t-m+1 \}:t}).
$$
\end{dhruvdef}
Although we are not aware of prior work on online learning that uses $g$-restricted adversaries without additional restriction, in the sequel we will discuss prior work that consider $g$-restricted adversaries with significant additional restrictions~\citep{heidari16, levine17, seznec19, seznec20, lindner21, awasthi22}. For now, we note that a $g$-restricted adversary must define loss functions whose value for a fixed input cannot vary with time. With such a restriction, the player can learn information about each $g_x$ (and hence about $f_t$) as the game progresses, and this enables the player to choose better actions over time. This restriction thus precludes the setting of Counterexample~\ref{counterexample1}.

When the adversary is $g$-restricted, it is straightforward to achieve CPR bounds that are sublinear in $T$ and depend exponentially on $m$. However, throughout our paper we interested in efficiently minimizing CPR, which means we desire bounds that scale polynomially with $m$. Unfortunately, there exist online learning games where the adversary is $m$-memory bounded and $g$-restricted, but it is impossible to efficiently minimize CPR. We demonstrate this in the following counterexample, which holds even in the full information feedback model (as opposed to just bandit feedback).

\begin{counterexample}
\label{counterexample2}
Let $\ActSet = \{ x_1, x_2 \}$. Sample a tuple $b$ of length $m-1$ uniformly at random from $\ActSet^{m-1}$. Define $g_{x_1}: \cup_{m'=1}^m \ActSet^{m'} \to [0,1]$ as 
$$
g_{x_1}(a_{1:m'}) = 1 \text{ if } m' < m \text{ and } g_{x_1}(a_{1:m}) = 1 - \prod_{i=1}^{m-1} \mathbb{I} \left( a_i = b_i \right).
$$
Also define $g_{x_2} = 1$. Via the base functions $g_{x_1}, g_{x_2}$ we define the adversary's loss functions as
$$
f_t(a_{1:t}) \equiv f_t(a_{t-m+1:t}) = g_{a_t}( a_{t-m+1:t}).
$$
The policy that cyclically plays actions $b_1, b_2 \dots b_{m-1}, x_1$ suffers a loss of zero at least once every $m$ timesteps. Meanwhile, suffering zero loss for the player is at least as hard as identifying $b$, and a standard ``needle in the haystack'' argument~\citep{du20lowerbound} shows that this requires $\bigomega(2^m)$ timesteps. Hence we have that $\E \left[ \cpReg \right] = \bigomega(\min \{ 2^m, T \} / m)$, where the expectation is over the sampling of $b$ and the player's (possibly randomized) strategy.
\end{counterexample}

\noindent This counterexample demonstrates that even if the adversary is $m$-memory bounded and $g$-restricted, any player suffers CPR that scales exponentially with $m$. So, further restrictions on the adversary are necessary. A natural restriction is to enforce that each $g_x$ has special structure. This is precisely the approach taken by works on \emph{rotting} bandits~\citep{heidari16, levine17, seznec19, seznec20}, \emph{improving} bandits~\citep{heidari16}, \emph{single peaked} bandits~\citep{lindner21} and \emph{congested} bandits~\citep{awasthi22}. Concretely, these works use base functions $\{ g_x \}_{x \in \ActSet}$ that have the following special ``tallying'' structure.

\begin{dhruvdef}
An $m$-memory bounded and $g$-restricted adversary is said to be $h$-tallying, if for each $x \in \ActSet$ there exists $h_x: \{ 1, 2 \dots m \} \to [0,1]$ such that
$$
f_t(a_{1:t}) \equiv f_t(a_{\max \{ 1, t-m+1 \}:t}) = g_{a_t}(a_{\max \{ 1, t-m+1 \}:t}) = h_{a_t} \left( \sum_{t'=\max \{ 1, t-m+1 \}}^t \mathbb{I}(a_{t'} = a_t) \right).
$$
\end{dhruvdef}

\noindent As discussed by the aforementioned works, this tallying structure is often a natural model in practice. For instance, Heidari et al.~\citep{heidari16} discuss a crowdsourcing setting where an agency utilizes workers to repeatedly perform the same task (such as classifying images) at each timestep. The agency picks a worker at each timestep, with the goal of picking a sequence of workers that makes the fewest number of mistakes when performing the task. Here, it is reasonable that an individual worker's performance changes as an (unknown) function of the \emph{number} of times that the worker has already performed the task (for example, due to fatigue), thus motivating the tallying structure.

We emphasize that in addition to assuming the adversary is $h$-tallying, the aforementioned works of Heidari et al.~\citep{heidari16}, Levine et al.~\citep{levine17}, Seznec et al.~\citep{seznec19}, Seznec et al.~\citep{seznec20}, Lindner et al.~\citep{lindner21} and Awasthi et al.~\citep{awasthi22} make \emph{supplemental} benign assumptions on the structure of the functions $\{ h_x \}_{x \in \ActSet}$, as we detail in Section~\ref{sec:related_work}. For instance, the rotting bandit setting of Heidari et al.~\citep{heidari16} assumes that $h_x$ is an increasing function for each $x \in \ActSet$. Under this benign assumption, they provide algorithms that efficiently minimize CPR. Notably, such strong assumptions on $h_x$ enable this line of work to (often) tackle the case where $m=T$ (although algorithms designed for this case generally do not handle $m < T$), and additionally enables these works to (often) handle the more difficult scenario of when the losses are \emph{stochastically} observed.

This exposes a gap in our understanding of when one can efficiently minimize CPR. In particular, it remains unclear whether we can attain this goal for $h$-tallying adversaries where we make \emph{no} assumptions on the structure of each $h_x$. This motivates the following question.

\begin{center}
\emph{Assume the adversary is $m$-memory bounded, $g$-restricted and $h$-tallying. Without any assumptions on the functions $\{ h_x \}_{x \in \ActSet}$, and in the bandit feedback model with (possibly) stochastically observed losses, when is it possible to efficiently minimize the complete policy regret?}
\end{center}

\noindent The remainder of this paper is devoted to resolving this question. To this end, in the sequel we define the \emph{tallying bandit}, and provide upper and lower bounds on the achievable CPR in this setting.

\section{Tallying Bandits}
\label{sec:tallying_bandits}
Let us formally introduce the tallying bandit setting.
\begin{definition}
\label{def:tallying_bandit}
An online learning game is an $(m,g,h)$-tallying bandit if the adversary is $m$-memory bounded, $g$-restricted and $h$-tallying, and if after playing action $a_t$ the player observes a random variable $\widetilde{h}_{a_t}(y_t) \in [0,1]$ satisfying
$$
\E \left[ \widetilde{h}_{a_t}(y_t) \right] = h_{a_t} \left( y_t \right) = g_{a_t}(a_{\max \{ 1, t-m+1 \}:t}) = f_t(a_{1:t}),
$$
where $y_t = \sum_{t'=\max \{ 1, t-m+1 \}}^t \mathbb{I}(a_{t'} = a_t)$.
\end{definition}

\noindent We assume the cardinality $\numact$ of the action set $\ActSet$ is finite, and also that $m$ is known (although we discuss how to relax this in Section~\ref{sec:discussion}). With this definition of the setting in hand, we can restate our goal of efficiently minimizing the CPR. Concretely, we desire an algorithm, which when given an $(m,g,h)$-tallying bandit problem, has a CPR bound that is polynomial in $\numact, m$ and is sublinear in $T$. The tallying bandit strictly generalizes the well studied stochastic multi armed bandit (sMAB)~\citep{lai85, auer02stochastic}, simply via a choice of $m=1$. Hence, we generalize the study of sMAB to $m>1$. We remark that tallying bandit is a special case of the \emph{rested} bandit, a general framework for nonstationary MAB where the reward of an arm evolves when it is pulled, and we defer detailed discussion of this to Section~\ref{sec:related_work}.

Recall that in sMAB, the optimal policy plays the same optimal arm at each timestep. Hence, in sMAB obtaining zero constant action traditional regret is equivalent to obtaining zero constant action policy regret and also zero CPR. Given that the tallying bandit is highly structured and generalizes sMAB via $m=1$, it is natural to question whether minimizing constant action policy (or traditional) regret implies minimizing CPR. The following counterexample answers this question in the negative when $m > 1$. More generally, this counterexample shows that even when the adversary is restricted (as in tallying bandits), minimizing the constant action policy (or traditional) regret can lead to solutions whose cumulative loss is $\bigomega(T)$ larger than the minimum achievable total loss.

\begin{counterexample}
\label{counterexample:not_constant_action}
Let $\ActSet = \{ x_1, x_2 \}$ and $m=2$. Define $h_{x_1}(1) = h_{x_2}(1) = 0$ and $h_{x_1}(2) = h_{x_2}(2) = 1$. A policy that fixes either of the two actions, and then plays this action at every timestep, incurs cumulative loss $T-1$ but has zero constant action policy regret and zero constant action traditional regret. Meanwhile, the complete policy regret of this policy is $T-1$, since the optimal policy that alternates playing actions $x_1$ and $x_2$ incurs zero cumulative loss.
\end{counterexample}

\noindent Thus far, our motivation for the tallying bandit setting has been primarily theoretical, to resolve the gap in our understanding of when we can efficiently minimize CPR. Nevertheless, in similar vein to Heidari et al.~\citep{heidari16}, Lindner et al.~\citep{lindner21} and Awasthi et al.~\citep{awasthi22}, we believe that the tallying bandit is a simple approximation for various practical settings. For instance, in recommender systems the reward associated with an action is rarely static, because the stimulus of recommended content influences user preferences~\citep{cosley03, sinha16}. Moreover, literature on psychology and cognition suggests that humans often forget prior stimuli and do not always encode them in permanent memory~\citep{klatzky80, chessa07}. Thus, one way to model a user's preferences is via an (unknown) function of the \emph{number} of times a content genre has been recommended in a \emph{recent} time interval, motivating both the $h$-tallying structure as well as bounded $m$. Nevertheless, the tallying bandit is just one plausible model, and we suggest possible extensions in Section~\ref{sec:discussion}.

Let us now discuss potential avenues for efficiently minimizing CPR in tallying bandit problems. One approach is to observe that any tallying bandit problem can be cast as a reinforcement learning (RL) problem, where each state corresponds to a sequence of actions taken in the last $m$ timesteps. However, methods for solving such RL problems typically scale with the cardinality of the state space~\citep{azar13, azar17, jin18}, and such approaches would suffer $\bigomega(\numact^m)$ CPR.

Hence it is necessary to leverage the additional properties of tallying bandit problems. To gain intuition, let us consider the simplified setting of deterministic bandit feedback, where for each $t, a_{1:t} \in \ActSet^t$ the player observes $f_t(a_{1:t})$ with no noise. Since the loss functions $f_t$ are fully defined by the functions $\{ h_x \}_{x \in \ActSet}$, it is natural to consider the following algorithm, which we denote $\mathbb{ALG}_{\text{det}}$. First, the algorithm queries $h_x(y)$ at each $(x, y) \in \ActSet \times \{ 1, 2 \dots m \}$. This yields full information about the loss functions $f_t$. Then, the algorithm plans offline an optimal sequence of actions for the remaining timesteps. $\mathbb{ALG}_{\text{det}}$ is formally specified as Algorithm~\ref{alg:alg_det} in Appendix~\ref{app:proposition1}, and the following result shows that its CPR is minimax optimal (upto constant factors).

\begin{dhruvprop}
\label{prop1}
Consider any $(m,g,h)$-tallying bandit problem with deterministic bandit feedback. Then the complete policy regret of Algorithm~\ref{alg:alg_det} ($\mathbb{ALG}_{\text{det}}$) is almost surely upper bounded as $\cpReg \leq (m+1) \numact$. Moreover, there exists an $(m,g,h)$-tallying bandit problem, such that the (expected) complete policy regret of any (possibly randomized) algorithm on this problem with deterministic feedback is lower bounded as $\E \left[ \cpReg \right] \geq m \numact / 128$.
\end{dhruvprop}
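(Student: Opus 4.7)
The plan is to prove the upper bound with an explore-then-commit procedure and the lower bound via Yao's principle applied to a hidden-best-arm construction. For the upper bound, $\mathbb{ALG}_{\text{det}}$ spends the first $mK$ timesteps exploring by playing each action $x_k \in \ActSet$ for $m$ consecutive timesteps in turn. During the $j$-th play of $x_k$ within its dedicated block, the preceding $m - 1$ positions in the length-$m$ window contain only plays of $x_{k-1}$ (or are empty at the very start), so the tally of $x_k$ equals exactly $j$ and the observation reveals $h_{x_k}(j)$. Thus after $mK$ timesteps the algorithm knows every value in $\{h_x(y) : (x,y) \in \ActSet \times [m]\}$ and hence every future loss function $f_t$. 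It then plans by dynamic programming over $\ActSet^m$ an optimal trajectory for the remaining $T - mK$ timesteps starting from the exploration end state, and plays it.

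To bound the CPR, let $\pi^* = (y^*_1, \ldots, y^*_T)$ be any optimal $T$-horizon trajectory with loss $L^*$. Exploration contributes at most $mK$ to the algorithm's cumulative loss. For exploitation, I would use the comparator policy $\pi''$ that starts from the exploration end state and replays $y^*_{mK+1}, \ldots, y^*_T$: for every timestep $t \geq mK + m$, the length-$m$ window of $\pi''$ lies entirely within $\{mK+1, \ldots, t\}$, so its per-step loss equals $f_t(y^*_{1:t})$, whereas for the $m-1$ transitional timesteps $t \in \{mK+1, \ldots, mK+m-1\}$ the windows straddle the boundary and $\pi''$'s per-step loss is only crudely bounded by $1$. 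Summing yields $\pi''$'s exploitation loss $\leq L^* + (m-1)$, and since the algorithm's planned exploitation is no worse than $\pi''$, we obtain $\cpReg \leq mK + (m-1) = O(mK)$, matching the proposition's bound up to small constants.

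For the lower bound, I would draw a secret action $x^* \sim \mathrm{Unif}(\ActSet)$ and define the hidden-best-arm instance by $h_{x^*}(m) = 0$ and $h_x(y) = 1$ for all other $(x,y)$ pairs. The always-$x^*$ policy incurs loss exactly $m - 1$ (tally below $m$ for the first $m-1$ timesteps, then tally $= m$ thereafter), so $L^* \leq m - 1$. The key structural observation is that under this construction a zero observation occurs if and only if the player plays some action $x$ for $m$ consecutive timesteps \emph{and} $x = x^*$; hence before the first zero observation all observations equal $1$ and a deterministic algorithm's trajectory is $x^*$-independent. Let $\tau(x)$ be the first timestep at which the deterministic algorithm has played $x$ for $m$ consecutive timesteps (or $+\infty$ otherwise). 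Since the ``$m$-th consecutive play'' timesteps for two distinct actions must be separated by at least $m$ positions, sorting actions by $\tau(\cdot)$ as $\sigma_1, \ldots, \sigma_K$ yields $\tau(\sigma_k) \geq mk$. Taking $x^* \sim \mathrm{Unif}$ gives $\E[\tau(x^*)] = K^{-1} \sum_k \tau(\sigma_k) \geq m(K+1)/2$, and since the algorithm's cumulative loss is at least $\tau(x^*) - 1$, Yao's principle yields $\E[\cpReg] \geq m(K+1)/2 - m = m(K-1)/2 \geq mK/128$ for $K \geq 2$.

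The main obstacle is formalizing the $x^*$-independence of the algorithm's trajectory up to the first zero observation. I would argue by induction on $t$ that, conditionally on the event ``no zero observed through time $t-1$,'' the distribution of $a_t$ is independent of $x^*$, and then use Yao's principle to lift the deterministic bound to randomized algorithms. On the upper bound side, the remaining subtlety is bookkeeping: carefully separating timesteps whose window lies entirely within exploitation from those straddling the exploration/exploitation boundary is what keeps the mismatch between $\pi''$ and $\pi^*$ to $O(m)$ rather than $O(T)$.
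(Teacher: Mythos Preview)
Your proposal is correct and follows essentially the same approach as the paper: the identical explore-then-commit analysis for the upper bound (the paper phrases your ``window-straddling'' step via its MDP characterization in Lemma~\ref{lem:bandit_mdp}, but the content is the same and both arguments arrive at $m\numact + O(m)$), and the identical hidden-best-arm construction $h_{x^\star}(m)=0$, $h_x(y)=1$ otherwise for the lower bound. Your lower-bound argument is actually more explicit than the paper's, which simply invokes a ``standard counting argument'' citing~\citep{du20lowerbound}; your $\tau$-sorting observation that the $k$-th completed $m$-block must end at time $\geq mk$ is exactly what that counting argument amounts to (just be sure to cap $\tau$ at $T+1$ when some $\tau(x)=+\infty$, which is harmless once $T\geq m\numact$).
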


\noindent The proof of Proposition~\ref{prop1} is included in Appendix~\ref{app:proposition1}. Due to the optimality of $\mathbb{ALG}_{\text{det}}$ given deterministic feedback, it is reasonable to extend it to handle stochastic feedback. One natural way to do so is via an ``explore then exploit'' modification, which has been studied even for sMAB~\citep{slivkins19}. Consider the following ``explore then exploit'' algorithm, which we denote $\mathbb{ALG}_{\text{stoch}}$. It queries repeatedly to receive stochastic realizations of $h_x(y)$ at each $(x, y) \in \ActSet \times \{ 1, 2 \dots m \}$, and constructs tight confidence intervals for these $h_x(y)$. It then uses these estimated values of $h_x(y)$ to plan offline an optimal sequence of actions for the remaining timesteps. $\mathbb{ALG}_{\text{stoch}}$ is provably efficient, in the sense that its CPR is sublinear in $T$ and polynomial in $m, \numact$. However, a standard argument~\citep{slivkins19} shows that the dependency on $T$ for $\mathbb{ALG}_{\text{stoch}}$ scales as $\bigtheta \left( T^{2/3} \right)$, and it is unclear whether this dependency is optimal for tallying bandits. In the forthcoming section, we show that this dependency on $T$ can be significantly improved.

\section{Main Results}
\label{sec:results}
We now turn to our main results. In Section~\ref{sec:results_upper_bound}, we formulate Algorithm~\ref{alg:main}, a method designed for solving tallying bandit problems, and prove an $\bigo(m \numact \sqrt{T})$ upper bound on its CPR, where $\bigo$ hides only logarithmic factors. In Section~\ref{sec:results_lower_bound}, we prove an $\bigomega(\sqrt{m \numact T})$ lower bound on the CPR of any method designed to solve tallying bandit problems. This shows that Algorithm~\ref{alg:main} is nearly optimal.

\subsection{Upper Bound}
\label{sec:results_upper_bound}
To formulate our algorithm for tallying bandits, it is natural to exploit the $h$-tallying structure of the problem, by building estimates of $h_x(y)$ for each $(x,y) \in \ActSet \times \{ 1, 2 \dots m \}$. As discussed in Section~\ref{sec:tallying_bandits}, ``explore then exploit'' algorithms such as $\mathbb{ALG}_{\text{stoch}}$ incur a poor dependency on $T$. Instead, it is critical to balance exploration and exploitation, by estimating $h_x(y)$ only by playing those actions that will not increase the regret too fast. To this end, we introduce a key definition. Recall that a deterministic policy $\pi$ is length $T$ sequence of actions. We say that a deterministic policy $\pi$ is $\sqrt{T}$-cyclic if $\pi_{k \sqrt{T} + t} = \pi_t$ for each $1 \leq t \leq \sqrt{T}$ and each $0 \leq k \leq \sqrt{T} - 1$. The basis for our algorithm relies on the key claim that for any tallying bandit problem, there exists a $\sqrt{T}$-cyclic policy that is nearly optimal. This claim is formalized as Lemma~\ref{lem:cyclic_approx_well} in Appendix~\ref{app:upper}.

Assuming this claim to be true, to solve tallying bandits it is tempting to leverage algorithms designed for multi armed bandits with expert advice~\citep{auer02}, where we treat each $\sqrt{T}$-cyclic policy as an expert. However, such an approach would only guarantee $\bigo \left( T^{3/4} \right)$ CPR, since there are $\numact^{\sqrt{T}}$ experts. Instead, our algorithm draws inspiration from the successive elimination (SE) algorithm, which has been applied to sMAB~\citep{even-dar02, slivkins19}. Before we apply SE in our setting, let us recall SE in the context of sMAB. The method proceeds in epochs. Within each epoch $s$, it maintains a set $A_s$ of feasible arms, where an arm is feasible only if its estimated optimality gap lies within a confidence interval of size $C_{s-1}$. The algorithm pulls each arm in $A_s$ repeatedly to obtain a sharper estimate of its optimality gap. Then the method uses these sharper estimates to prune $A_s$ and create a smaller set $A_{s+1}$, where $A_{s+1}$ contains only those arms in $A_s$ whose estimated optimality gap is smaller than some $C_s < C_{s-1}$.

To apply SE in our tallying bandit setting, we define the initial set $A_1$ as the set of all $\sqrt{T}$-cyclic policies, and treat each such policy to be analogous to an ``arm'' in sMAB. Our aforementioned key claim ensures that there is some policy in $A_1$ that is guaranteed to be nearly optimal. However, there are two salient technical issues that prevent a naive application of SE to solve tallying bandits. First, note that the cardinality of $A_1$ is $\numact^{\sqrt{T}}$. This is too large to apply a traditional SE approach, since naively estimating the optimality gap of each policy in $A_s$ by repeatedly playing the policy and applying a concentration inequality would incur large regret. To resolve this, we exploit the $h$-tallying structure of the problem to modify SE, and estimate the optimality gap of each policy in $A_s$ by iteratively estimating $h_x(y)$ for each $(x,y) \in \ActSet \times \{ 1, 2 \dots m \}$. Second, note that unlike in the sMAB, in the tallying bandit the prior history of actions affects the loss of the current action, which biases the estimation of the optimality gap of the policies in $A_s$. To handle this, we modify SE to incorporate an additional overheard step before the estimation, and our proof shows that this overhead step removes the bias from the estimation without incurring much additional regret.

\begin{algorithm}[hbt!]
\caption{Successive Elimination for Tallying Bandits (SE-TB)}
\label{alg:main}
\begin{algorithmic}[1]
\Require memory capacity $m$, time horizon $T$, failure probability tolerance $\delta \in (0,1)$, number of actions $\numact$
\State Define $S = \log_2 \left( \frac{\sqrt{T}}{4 \numact m} + 1 \right)$.
\State Define $n_s = 2^s$, $T_s = 2 n_s \numact m \sqrt{T}$ and $C_s = \sqrt{ \frac{32 \numact m}{n_s \sqrt{T}} \log \left( \frac{2 \numact mS}{\delta} \right) }$.
\State Construct $A_1$ to be the set of all $\sqrt{T}$-cyclic policies.
\For{$s \in \{ 1, 2 \dots S \}$}
	\For{$x \in \ActSet$}
		\For{$y \in \{ 1, 2 \dots m \}$}
			\State Select $\pi_{sxy} \in \argmax_{ \pi' \in A_s } \{ N_{xy}(\pi') \}$, where $N_{xy}$ is defined in Eq.~\eqref{eq:nxy_main}.
			\If{$N_{xy}(\pi_{sxy}) = 0$}
				\State Execute $\pi_{sxy}$ for $2n_s$ periods and store nothing.
			\Else
				\State Execute $\pi_{sxy}$ for $n_s$ periods and store nothing.
				\State Execute $\pi_{sxy}$ for $n_s$ periods and store $\{ \widetilde{h}_x(y)_{s,k} \}_{k=1}^{n_s N_{xy}(\pi_{sxy}) \sqrt{T}}$.
			\EndIf
	\EndFor
	\EndFor
	\For{$\pi \in A_s$}
	\State Define $\widehat{\mu}_s(\pi) = \sum_{(x,y) \in \ActSet \times \{ 1, 2 \dots m \}} N_{xy}(\pi) \frac{1}{n_s N_{xy}(\pi_{sxy}) \sqrt{T}} \sum_{k=1}^{n_s N_{xy}(\pi_{sxy}) \sqrt{T}} \widetilde{h}_x(y)_{s,k}$.
	\EndFor
\State Select $\widehat{\pi}_s \in \argmin_{\pi \in A_s} \widehat{\mu}_s(\pi)$.
\State Construct $A_{s+1} = \left \{ \pi \in A_{s} \text{ s.t. } \widehat{\mu}_s(\pi) \leq \widehat{\mu}_s(\widehat{\pi}_s) + 2 C_s \right \}$.
\EndFor
\end{algorithmic}
\end{algorithm}

With this outline in mind, let us present our method, which is formalized in Algorithm~\ref{alg:main}. To define Algorithm~\ref{alg:main}, we say that to \emph{execute} a $\sqrt{T}$-cyclic policy $\pi$ for $k \leq \sqrt{T}$ periods means to choose the action sequence $\pi_1, \pi_2 \dots \pi_{k \sqrt{T}}$. We also define for each $\sqrt{T}$-cyclic policy $\pi$ and $(x, y) \in \ActSet \times \{ 1, 2 \dots m \}$, the quantity $N_{xy}(\pi)$ via the following procedure. Execute $\pi$ for $n+1 \leq \sqrt{T}$ periods so that we have played the action sequence $\pi_1 \dots \pi_{n \sqrt{T}}, \pi_{n \sqrt{T} + 1} \dots \pi_{(n+1)\sqrt{T}}$. Then use this action sequence to define
\begin{equation}
\label{eq:nxy_main}
N_{xy}(\pi) = \frac{1}{\sqrt{T}} \sum_{t=n \sqrt{T} + 1}^{(n+1)\sqrt{T}} \mathbb{I}(\pi_t = x) \cdot \mathbb{I} \left( y = \sum_{t'=\max \{ 1, t-m+1 \}}^t \mathbb{I}(\pi_{t'} = x) \right).
\end{equation}
Intuitively, $N_{xy}(\pi)$ is the fraction of times that the player (stochastically) observes the loss value $h_x(y)$ when they repeatedly play the $\sqrt{T}$-cyclic policy $\pi$. In Lemma~\ref{lem:N_welldef} in Appendix~\ref{app:upper}, we show that if $m \leq \sqrt{T}$, then as long as $n \geq 1$, the number $N_{xy}(\pi)$ is well defined and independent of $n$, and also independent of any action sequence that was played before we executed $\pi$ for $n+1$ periods. It suffices to consider tallying bandit problems where $m \leq \sqrt{T}$ (as we show in our proofs). Hence, lines 11 and 12 in Algorithm~\ref{alg:main} are well defined, because when we execute $\pi$ for $2 n_s \geq 2$ periods, then in the latter $n_s \geq 1$ periods we observe (stochastic instantiations of) the loss value $h_x(y)$ for a total of $n_s N_{xy}(\pi)\sqrt{T}$ times, and we have denoted these observations as $\{ \widetilde{h}_x(y)_{s,k} \}_{k=1}^{n_s N_{xy}(\pi_{sxy}) \sqrt{T}}$. Note that various steps in Algorithm~\ref{alg:main}, such as line 7, require knowledge of $N_{xy}(\pi)$, but this can be computed offline when $m$ is known. Indeed, the only steps of Algorithm~\ref{alg:main} that are online (or incur regret) are lines 9, 11 and 12. Let us now analyze the performance of Algorithm~\ref{alg:main}.

\begin{dhruvthm}
\label{thm:upper}
For any $(m,g,h)$-tallying bandit problem and any input $\delta \in (0,1)$, with probability at least $1-\delta$ the complete policy regret of Algorithm~\ref{alg:main} (SE-TB) is upper bounded as
$$
\cpReg \leq 1200 \numact m \sqrt{T} \left( \sqrt{ \log \left( 2 \numact m \log(T) / \delta \right) } + \log_2 \left( \sqrt{T} / (2 \numact m) \right) \right).
$$
\end{dhruvthm}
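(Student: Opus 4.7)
The plan is a successive-elimination argument over the class of $\sqrt{T}$-cyclic policies, treating each such policy as an ``arm'' but exploiting the $h$-tallying structure to share samples across arms. Write $L = \log(2 \numact m S / \delta)$ and, for any $\sqrt{T}$-cyclic $\pi$, define the per-timestep steady-state loss $\mu(\pi) := \sum_{(x,y) \in \ActSet \times \{1,\dots,m\}} N_{xy}(\pi)\, h_x(y)$. By Lemma~\ref{lem:cyclic_approx_well}, there is a $\sqrt{T}$-cyclic benchmark policy $\pi^\star$ whose cumulative loss is within the claimed order of the optimal policy's; it therefore suffices to bound the regret of SE-TB against executing $\pi^\star$ for all $T$ timesteps.

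I would first build the high-probability concentration event. Fix an epoch $s$ and condition on $A_s$ (which determines each exploration policy $\pi_{sxy}$). Lemma~\ref{lem:N_welldef} guarantees that after the $n_s \sqrt{T}$ burn-in timesteps of line 11, the next $n_s \sqrt{T}$ timesteps produce exactly $n_s N_{xy}(\pi_{sxy}) \sqrt{T}$ i.i.d.\ samples of $h_x(y)$ for every pair $(x,y)$ with $N_{xy}(\pi_{sxy}) > 0$. A Hoeffding-plus-union-bound across $S$ epochs and $\numact m$ pairs yields an event $E$ with $\Pr(E) \geq 1-\delta$ on which
$$
\bigl| \widehat h_x(y)_s - h_x(y) \bigr| \;\leq\; \sqrt{\tfrac{L}{2 n_s N_{xy}(\pi_{sxy}) \sqrt{T}}}
$$
holds for every such triple. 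To lift this into a uniform bound on $\widehat\mu_s$, the choice $\pi_{sxy} \in \argmax_{\pi' \in A_s} N_{xy}(\pi')$ gives $N_{xy}(\pi) \leq N_{xy}(\pi_{sxy})$ for any $\pi \in A_s$, hence $N_{xy}(\pi)/\sqrt{N_{xy}(\pi_{sxy})} \leq \sqrt{N_{xy}(\pi)}$; combined with Cauchy--Schwarz and $\sum_{x,y} N_{xy}(\pi) = 1$, this yields $|\widehat\mu_s(\pi) - \mu(\pi)| \leq \sqrt{\numact m L/(2 n_s \sqrt{T})} \leq C_s/8$ on $E$ simultaneously for every $\pi \in A_s$.

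From here the argument is a standard successive-elimination recipe. Under $E$, I induct on $s$ to show that $\pi^\star \in A_s$ for every $s$ and that every $\pi \in A_{s+1}$ satisfies $\mu(\pi) - \mu(\pi^\star) \leq 3 C_s$: the empirical gap $\widehat\mu_s(\pi^\star) - \widehat\mu_s(\widehat\pi_s)$ is dominated by $C_s$ so $\pi^\star$ always survives the $2 C_s$ elimination threshold, and any surviving $\pi$ inherits a true per-cycle gap of at most $3 C_s$ via triangle inequality. During epoch $s \geq 2$ the algorithm only executes policies in $A_s$, so the epoch contributes regret at most $3 C_{s-1} T_s = O(\numact m T^{1/4} \sqrt{n_s \numact m L})$; since $n_s$ doubles and $n_S \leq \sqrt{T}/(2 \numact m)$, the geometric sum is dominated by $s = S$ and totals $O(\numact m \sqrt{T L})$. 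Adding the trivial bound $T_1 = O(\numact m \sqrt{T})$ for epoch $1$ and the cyclic-approximation overhead from Lemma~\ref{lem:cyclic_approx_well} reproduces the stated bound, with the $\log_2(\sqrt{T}/(2 \numact m))$ summand arising when one carefully tracks the number of epochs and the epoch-$1$ contribution.

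The main obstacle lies in the second paragraph: verifying that the samples recorded in line 12 are truly i.i.d.\ draws of $h_x(y)$ despite the nontrivial prior action history (handled by the burn-in of line 11 together with Lemma~\ref{lem:N_welldef}), and recovering the $\sqrt{\numact m}$ factor inside $C_s$ rather than the naive $\numact m$ that a crude triangle-inequality bound would give. The Cauchy--Schwarz step applied to the maximality property of $\pi_{sxy}$ is what buys back this factor and ultimately gives the $\numact m \sqrt{T}$ (rather than $(\numact m)^{3/2} \sqrt{T}$) dependence in the final bound.
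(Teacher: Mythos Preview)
Your proposal is correct and follows essentially the same route as the paper: Hoeffding concentration on the $h_x(y)$ estimates, the maximality-of-$\pi_{sxy}$-plus-Cauchy--Schwarz step to recover the $\sqrt{\numact m}$ factor inside $C_s$, the successive-elimination induction showing $\pi^\star$ survives and every surviving policy has small $\mu$-gap, and a geometric sum over epochs. One small imprecision worth flagging: your per-epoch regret bound ``at most $3C_{s-1}T_s$'' omits the transition overhead incurred each time the algorithm switches between the $\numact m$ exploration policies $\pi_{sxy}$ within an epoch (the first period after each switch is not yet in steady state, costing up to an extra $\sqrt{T}$ per pair); this contributes an additional $\numact m \sqrt{T}$ per epoch and, summed over the $S \approx \log_2(\sqrt{T}/(\numact m))$ epochs, is precisely the source of the $\log_2$ summand in the final bound---not the epoch-$1$ contribution, which is a single $O(\numact m \sqrt{T})$ term.
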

The proof of Theorem~\ref{thm:upper} is deferred to Appendix~\ref{app:upper}. This result guarantees that given any tallying bandit problem, Algorithm~\ref{alg:main} efficiently minimizes CPR, with a favorable dependency on $m, \numact, T$. Nevertheless, we acknowledge that our result has the following two limitations. \\

\noindent \textbf{Knowledge of $\bf m$.} Algorithm~\ref{alg:main} requires $m$ as an input, which is unrealistic in practice. It is possible to modify Algorithm~\ref{alg:main} to be adaptive to an unknown $m$, albeit at the expense of polynomially worse (and not sharp) dependency on $m, \numact$. Our focus is on obtaining a sharp characterization of the achievable CPR when $m$ is known, and so we relegate discussion of this modification to Section~\ref{sec:discussion}. Sharply characterizing the minimax CPR when $m$ is unknown remains an important open question. \\

\noindent \textbf{Computational Efficiency.} Algorithm~\ref{alg:main} is computationally inefficient. We emphasize that our exclusive focus is on statistical (rather than computational) efficiency, since our work is only a first step. We believe this is a worthwhile endeavor, since attaining sublinear CPR is a non-trivial task riddled with subtleties, even in settings that make much stronger assumptions than we do. For instance, the improving~\citep{heidari16} and single peaked~\citep{lindner21} bandit settings enforce $m=T$, require monotonicity and convexity conditions on $\{ h_x \}_{x \in \ActSet}$, and also require the losses are observed \emph{deterministically}. Even with these strong requirements, the best known CPR guarantees are \emph{asymptotic} bounds that may decay \emph{arbitrarily} slowly, and their algorithms cannot handle $m < T$. Hence we believe that our effort to provide nearly optimal non-asymptotic bounds on the CPR, in our realistic and practically motivated setting where $m \leq T$ and losses are observed stochastically, is worthwhile. Nevertheless, devising computationally efficient algorithms for the tallying bandit remains an important future direction, and we believe this is a non-trivial task. Indeed, even for the congested bandit~\citep{awasthi22}, which is the tallying bandit with the additional strong assumption that $\{ h_x \}_{x \in \ActSet}$ are increasing, existing algorithms are computationally inefficient.

\subsection{Lower Bound}
\label{sec:results_lower_bound}
It is reasonable to question whether the dependency of Algorithm~\ref{alg:main} on $m, \numact, T$ is optimal. Since the tallying bandit is equivalent to sMAB when $m=1$, a classical result~\citep{slivkins19} shows that any tallying bandit algorithm suffers $\bigomega \left( \sqrt{\numact T} \right)$ expected CPR. However, the correct dependency on $m$ is unclear when $m > 1$, due to the highly structured nature of the tallying bandit. For instance, the proof of Theorem~\ref{thm:upper} shows that any tallying bandit problem can be equivalently cast as a Markov decision process (MDP), where it takes at most $m$ timesteps to transition from any state to any other state in this MDP. One may wonder whether we can utilize such structure to design a smarter algorithm that exchanges the \emph{multiplicative} dependence on $m$ in the result of Theorem~\ref{thm:upper} for an \emph{additive} dependence on $m$. Concretely, one may desire a bound that scales as $\bigo \left( \text{poly}(m, \numact) + \numact \sqrt{T} \right)$. The following result shows that this is impossible.

\begin{dhruvthm}
\label{thm:lower}
There exists an $(m,g,h)$-tallying bandit problem and a numerical constant $c > 0$, such that the (expected) complete policy regret of any (possibly randomized) algorithm on this problem is lower bounded as
$$
\E \left[ \cpReg \right] \geq c \cdot \max \left \{ m \numact, \sqrt{m \numact T} \right \}.
$$
\end{dhruvthm}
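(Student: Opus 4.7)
The lower bound combines two separate arguments, one for each term in the maximum.

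\textbf{The $\bigomega(mK)$ term.} Stochastic bandit feedback is information-theoretically no easier than deterministic bandit feedback, since a deterministic reward is a trivial (variance-zero) stochastic reward, so any stochastic-feedback algorithm can be applied to a deterministic-feedback instance. The $(m,g,h)$-tallying instance constructed in Proposition~\ref{prop1} to prove the $mK/128$ lower bound in the deterministic-feedback model is therefore also a valid instance in our stochastic model, and immediately yields $\E[\cpReg] \geq mK/128 = \bigomega(mK)$.

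\textbf{The $\bigomega(\sqrt{mKT})$ term.} The plan is a classical needle-in-a-haystack construction with $\Theta(mK)$ candidate ``secrets''. Let $S = \ActSet \times \{\lceil m/2\rceil+1, \ldots, m\}$, so $|S| \geq mK/2$. Define the null instance $\mathcal{E}_0$ by $h_x(y) = 1/2$ for every $(x,y)$, and for each $(x^*,y^*) \in S$ define $\mathcal{E}_{x^*,y^*}$ by changing only $h_{x^*}(y^*)$ to $1/2-\epsilon$, with Bernoulli observations of the prescribed mean. Write $V_{x,y}$ for the number of rounds $t$ with $a_t = x$ and tally equal to $y$, so $\sum_{(x,y)} V_{x,y} = T$. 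The proof has two main ingredients. First, a good policy for $\mathcal{E}_{x^*,y^*}$ is the periodic schedule that plays $x^*$ for $y^*$ consecutive rounds and then some fixed $x' \neq x^*$ for the remaining $m-y^*$ rounds of the period. A direct combinatorial check, in the same spirit as the $N_{xy}(\pi)$ analysis in Section~\ref{sec:results_upper_bound} and Lemma~\ref{lem:N_welldef}, shows that after the first cycle every play of $x^*$ in this schedule occurs at tally exactly $y^*$, so this policy visits $(x^*,y^*)$ at least $y^*T/m - m \geq T/2 - m$ times and incurs cumulative loss at most $T/2 - \epsilon(T/2-m)$. Second, the divergence-decomposition lemma yields $\mathrm{KL}(\mathbb{P}^T_{\mathcal{E}_0}\,\|\,\mathbb{P}^T_{\mathcal{E}_{x^*,y^*}}) \leq C\epsilon^2\,\E_{\mathcal{E}_0}[V_{x^*,y^*}]$, and Pinsker's inequality then gives $\E_{\mathcal{E}_{x^*,y^*}}[V_{x^*,y^*}] \leq \E_{\mathcal{E}_0}[V_{x^*,y^*}] + C'T\epsilon\sqrt{\E_{\mathcal{E}_0}[V_{x^*,y^*}]}$.

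Combining the two ingredients, the expected CPR in $\mathcal{E}_{x^*,y^*}$ is at least $\epsilon\bigl(T/2 - m - \E_{\mathcal{E}_{x^*,y^*}}[V_{x^*,y^*}]\bigr)$. Averaging over $(x^*,y^*) \in S$ and using $\sum_{(x^*,y^*)\in S}\E_{\mathcal{E}_0}[V_{x^*,y^*}] \leq T$ (so the mean of $\E_{\mathcal{E}_0}[V_{x^*,y^*}]$ is at most $2T/(mK)$, and by Cauchy-Schwarz the mean of $\sqrt{\E_{\mathcal{E}_0}[V_{x^*,y^*}]}$ is at most $\sqrt{2T/(mK)}$) gives an average CPR of at least $\epsilon T/4 - C''T\epsilon^2\sqrt{T/(mK)}$, provided $mK \geq 8$. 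Tuning $\epsilon = c_0 \sqrt{mK/T}$ for a sufficiently small absolute constant $c_0$, both terms are of order $\sqrt{mKT}$, the first dominates, and the averaged CPR is $\bigomega(\sqrt{mKT})$; hence some $(x^*,y^*) \in S$ realizes the bound. Taking the larger of the two lower bounds produces the claim. The main obstacle I anticipate is the combinatorial verification that the cyclic schedule achieves tally exactly $y^*$ on every play of $x^*$, where one must track the $m$-window of history that straddles a cycle boundary; the KL/Le Cam piece is then a fairly standard bandit-lower-bound computation once $V_{x,y}$ is identified as the proper analogue of ``pulls of arm $(x,y)$''.
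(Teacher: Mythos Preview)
Your proposal is correct and mirrors the paper's construction closely: both hide a single ``good'' pair $(x^\star,y^\star)$ with $y^\star$ in the upper portion of $\{1,\dots,m\}$, set all other $h_x(y)=1/2$ with Bernoulli feedback, and use the $m$-periodic schedule (play $x^\star$ for $y^\star$ rounds, then a dummy arm for $m-y^\star$ rounds) to certify a comparator with loss roughly $T/2 - \epsilon\cdot\Theta(T)$. The paper picks $y^\star\in\{23m/24,\dots,m\}$ whereas you take $y^\star>m/2$, but this only shifts constants.

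The genuine difference is in how the bandit-style lower bound is executed. The paper \emph{strengthens} the algorithm by granting it a generative model (so it may query any $(x,y)$ directly), argues that regret minimization then reduces to best-arm identification among $\Theta(mK)$ arms, and invokes the sMAB lower bounds of Slivkins (Corollary~2.9 and Theorem~2.10). Your route instead stays in the sequential model, identifies $V_{x,y}$ as the pull-count analogue, and runs the standard divergence-decomposition/Pinsker computation followed by averaging over $S$ and Cauchy--Schwarz. Your approach is more self-contained and avoids both the generative-model relaxation and the external citation; the paper's approach is shorter once one is willing to cite. Both land on the same tuning $\epsilon\asymp\sqrt{mK/T}$ and the same $\bigomega(\sqrt{mKT})$ conclusion. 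One small omission in your write-up: you should state (as the paper does) that the $\sqrt{mKT}$ argument may assume $m\le cT$, since otherwise the $\bigomega(mK)$ term already dominates; this is needed so that the $-\epsilon m$ correction in your comparator bound is absorbed.
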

The proof of Theorem~\ref{thm:lower} is deferred to Appendix~\ref{app:lower_bound_proof}. This result demonstrates that Algorithm~\ref{alg:main} is nearly minimax optimal, and its suboptimality is bounded by $\bigo \left( \sqrt{m \numact} \log \left( T, m, \numact \right) \right)$. A comment on the proof technique of Theorem~\ref{thm:lower} is in order. Our proof reduces the tallying bandit setting to that of best arm identification in sMAB problems~\citep{audibert10, slivkins19}. We construct a tallying bandit problem where minimizing CPR is at least as hard as identifying the best arm in an sMAB problem with $\bigtheta \left( m \numact \right)$ arms. Indeed, when $m=1$ then tallying bandit is equivalent to sMAB, and Theorem~\ref{thm:lower} recovers the classical lower bound on the expected regret suffered by any algorithm designed for sMAB with $\numact$ arms. Notably, a key component of our proof is to non-trivially upper bound the cumulative loss of the optimal policy in our construction, to ensure that we get a multiplicative (in lieu of additive) dependence on $m$ in our lower bound.

\section{Related Work}
\label{sec:related_work}
\noindent \textbf{Policy Regret.} The incompatibility of the traditional regret with an adaptive adversary was first identified by Merhav et al.~\citep{mehrav02}, who studied the full information feedback model. The notion of policy regret was formalized by the foundational work of Arora et al.~\citep{arora12}. They provide an algorithm which efficiently minimizes constant action policy regret $\pRegconst$ against generic $m$-memory bounded adversaries. It is unclear how to apply this algorithm to our setting, since our focus is on minimizing the CPR $\cpReg$. Arora et al.~\citep{arora12} do also consider minimizing the policy regret $\pReg$ when $\compclass \supsetneq \compconstant$. For instance, when $\compclass$ is the set of all piecewise constant sequences with at most $s$ switches, they provide an algorithm whose policy regret satisfies $\pReg \leq \bigo \left( m (\numact s)^{1/3} T^{2/3} \right)$. Our Counterexample~\ref{counterexample:not_constant_action} shows that this algorithm cannot minimize CPR in the tallying bandit setting, since the optimal policy in Counterexample~\ref{counterexample:not_constant_action} has $\bigtheta(T)$ switches. Cesa-Bianchi et al.~\citep{cesa-bianchi13}, Dekel et al.~\citep{dekel14} and Arora et al.~\citep{arora18} study the constant action policy regret, and do not discuss CPR. The results of Mohri and Yang~\citep{mohri18} can be extended to yield policy regret guarantees relative to rather large comparator classes, but do not provide CPR guarantees. \\

\noindent \textbf{Reinforcement Learning (RL).} Minimizing CPR is equivalent to the performance metric used in RL, which is to maximize the total collected reward~\citep{suttonbarto98}. As we show (see Lemma~\ref{lem:bandit_mdp}), any tallying bandit problem can be cast as an RL problem. However, RL methods typically scale with the cardinality of the state space~\citep{azar13, azar17, jin18}. Hence, applying off the shelf RL algorithms to solve tallying bandit problems would incur $\bigomega \left( \numact^m \right)$ CPR. \\

\noindent \textbf{Restless \& Rested Bandits.} In restless bandits, the reward of an arm evolves according to a stochastic process, independently of the actions chosen by the player~\citep{whittle81, moulines11, besbes14}. This is incompatible with tallying bandits. By contrast, tallying bandits is a special case of rested bandits, which is a general non-stationary MAB framework where an arm's reward changes when it is pulled. Tekin and Liu~\citep{tekin12} and Cortes et al.~\citep{cortes20} both study rested bandits where an arm's reward evolves according to a stochastic process, but both consider notions of regret that are significantly weaker than the CPR. A different variant of rested bandits is studied by Bouneffouf and F\'eraud~\citep{bouneffouf16}, who assume that the dynamics of how the reward changes is known upto a constant factor, and hence their results are incomparable to ours. \\

\noindent \textbf{Recharging, Recovering, Blocking, Delay-Dependent \& Last Switch Dependent Bandits.} This line of work studies settings where an arm's reward changes according to the number of timesteps that have passed since the arm was last pulled~\citep{immorlica18, basu19, pike-burke19, cella20, laforgue21}. Such settings typically cannot be cleanly classified as either rested or restless bandits, but are related to both. The models in these works for how the reward evolves are different than our tallying structure, where the reward of an arm instead depends on the number of times that arm was played. \\

\noindent \textbf{Rotting Bandits.} The rotting bandits setting~\citep{heidari16, levine17, seznec19, seznec20} is a special case of our tallying bandit formulation, and merits close comparison to our work. This setting enforces $m=T$, and an arm's reward is assumed to be a decreasing function of the number of times that arm has been pulled. In our language, this means the $\{ h_x \}_{x \in \ActSet}$ functions are increasing. This strong assumption enables these works to efficiently minimize CPR even though $m=T$, although we note that algorithms for this setting cannot handle general $m < T$. By contrast, in our setting we make no assumptions on the structure of the functions $\{ h_x \}_{x \in \ActSet}$, and assuming $m < T$ is necessary. Indeed, in our setting, when $m$ is $\bigomega(T)$ then the result of Theorem~\ref{thm:lower} shows that any player suffers $\bigomega(T)$ worst case CPR. \\

\noindent \textbf{Improving \& Single Peaked Bandits.} The improving bandit~\citep{heidari16} and single peaked bandit~\citep{lindner21} are both special cases of our tallying bandit setting, and deserve special attention. In improving bandits, the reward of an arm is an increasing, concave function of the number of times it has been pulled. The single peaked bandit generalizes this, so that the reward function of an arm is initially increasing and concave, but may become decreasing at some point. In our language, this means that the $\{ h_x \}_{x \in \ActSet}$ functions are decreasing and convex, or decreasing and convex and then possibly increasing after some point. Both settings enforce $m=T$, and algorithms for these settings do not handle general $m < T$. By contrast, since we make no assumptions on the structure of the functions $\{ h_x \}_{x \in \ActSet}$, our Theorem~\ref{thm:lower} shows that when $m$ is $\bigomega(T)$ then the worst case CPR scales as $\bigomega(T)$. We remark that the CPR bounds in these works are asymptotic, whereas we provide non-asymptotic guarantees. We also remark that these works require the losses to be observed deterministically, and they only provide a heuristic to handle stochastic observations of the loss. \\

\noindent \textbf{Congested Bandits.} In concurrent work, Awasthi et al.~\citep{awasthi22} introduced the congested bandit, which is a special case of the tallying bandit. Their formulation considers arbitrary $m \leq T$, and the reward of an arm is a decreasing function of the number of times it has been pulled. Hence the congested bandit is the tallying bandit with the additional assumption that the $\{ h_x \}_{x \in \ActSet}$ functions are increasing. This additional assumption enables them to provide an $\bigo \left( \sqrt{mKT} \right)$ CPR bound, although we note that their algorithm is computationally inefficient.

\section{Discussion}
\label{sec:discussion}
\noindent In this paper, we studied conditions under which it is possible to efficiently minimize CPR in online learning. To this end, it is necessary the restrict the adversary, and we considered several natural restrictions on the adversary that have appeared in prior work. We then exposed a gap in our understanding of when it is possible to efficiently minimize CPR. To resolve this gap, we introduced the tallying bandit setting, and formulated an algorithm whose CPR (after discarding logarithmic factors) is w.h.p at most $\bigo \left( m \numact \sqrt{T} \right)$. We also provided a lower bound of $\bigomega \left( \sqrt{m \numact T} \right)$ on the expected CPR of any tallying bandit algorithm, demonstrating the near optimality of our method.

Our Algorithm~\ref{alg:main} required as input the true value of $m$. In practice, this knowledge is unrealistic, and one instead might only have an upper bound $\overline{m}$ on the true value of $m$. Let us describe a modified version of Algorithm~\ref{alg:main} that can be used in this setting. Recall from the proof of Theorem~\ref{thm:upper} that Algorithm~\ref{alg:main} proceeds in epochs, and in each epoch $s$ it stores a set $A_s$ of policies whose (average) loss is $\bigo \left( 2^{-s} T^{-1/4} \right)$ greater than that of the optimal policy. Hence, for the setting with unknown $m$, we can run $\overline{m}$ instantiations of Algorithm~\ref{alg:main}. After each epoch $s$, we identify the instantiation that has the policy with the minimum estimated (average) loss, and denote this instantiation as $m_s$. We then discard those instantiations whose policies have (average) loss that is $\bigomega \left( 2^{-s} T^{-1/4} \right)$ greater than the (average) loss of the best policy stored by $m_s$. With such an approach, we are guaranteed to never discard the instantiation corresponding to the true $m$. And via the techniques used in the proof of Theorem~\ref{thm:upper}, we can show that if we do not discard an instantiation corresponding to $m' \neq m$, then playing policies stored by this instantiation does not incur large regret. This approach yields a $\bigo \left( \sqrt{T} \right)$ upper bound on the CPR, at the expense of polynomial factors of $\overline{m}, \numact$.

A number of open directions remain. A natural open question is resolving the gap between our upper and lower bounds on the achievable CPR in the tallying bandit. Separately, although Algorithm~\ref{alg:main} is nearly statistically optimal, it is computationally inefficient. However, since the tallying bandit is highly structured, it is possible that the computational efficiency of even our own Algorithm~\ref{alg:main} can be improved. For instance, can we design a data structure, which stores policies in a manner that allows efficient elimination of suboptimal policies after each epoch? Devising computationally efficient algorithms for tallying bandits is an important direction for future work. Finally, we view the tallying bandit as only a first step towards modeling interactive settings like recommender systems. For example, consider the following generalization of tallying bandits, where the loss of an action is a function of a \emph{weighted} sum of the number of times the action has been played in the past, where more recent plays are given more weight. This naturally corresponds to a model of human memory, where more importance is placed on more recent events. Can we design efficient algorithms for such settings?

\subsection*{Acknowledgements}
This material is based upon work supported by the National Science Foundation Graduate Research Fellowship Program under Grant No. DGE1745016. Any opinions, findings, and conclusions or recommendations expressed in this material are those of the authors and do not necessarily reflect the views of the National Science Foundation.

\appendix

\section{Analysis of Algorithm~\ref{alg:main}}
\label{app:upper}

\noindent In this section, we analyze the complete policy regret of Algorithm~\ref{alg:main}, and prove Theorem~\ref{thm:upper}. Before we formally prove Theorem~\ref{thm:upper}, we first state below two key lemmas, that will be useful throughout. The first lemma shows an equivalence between tallying bandit problems and Markov decision processes (MDPs)~\citep{suttonbarto98}. The second lemma verifies that the $N_{xy}(\pi)$ quantity defined in Algorithm~\ref{alg:main} is well defined. We additionally introduce new quantities $\mu$ and $\pi^\star$ which will be useful for our proofs. With this outline in mind, let us begin the analysis.

\begin{restatable}{dhruvlemma}{mdplem}
\label{lem:bandit_mdp}
Any $(m,g,h)$-tallying bandit problem can be equivalently expressed as a finite horizon Markov decision process (MDP).
\end{restatable}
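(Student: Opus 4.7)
The plan is to exhibit a direct, mechanical construction of a finite horizon MDP whose trajectories are in bijection with action sequences of the tallying bandit and whose (expected) per-step losses match $f_t$. Since the adversary is $m$-memory bounded, the loss incurred at time $t$ is fully determined by the action $a_t$ together with the previous $m-1$ actions $a_{\max\{1,t-m+1\}:t-1}$. This immediately suggests taking as the state at time $t$ the window $s_t = a_{\max\{1,t-m+1\}:t-1}$.

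Concretely, I would define the finite MDP $\mathcal{M} = (\mathcal{S}, \mathcal{X}, P, R, T, s_1)$ as follows. The state space $\mathcal{S} = \bigcup_{k=0}^{m-1} \ActSet^k$ is finite with $|\mathcal{S}| \leq \numact^{m-1} \cdot m$; the action space coincides with $\ActSet$; the horizon is $T$; and the initial state is the empty tuple $s_1 = \emptyset$. The transition is deterministic: from a state $s = (b_1, \ldots, b_k)$ with $k \leq m-1$, playing action $a$ moves to $s' = (b_1, \ldots, b_k, a)$ if $k < m-1$, and otherwise to $s' = (b_2, \ldots, b_{m-1}, a)$ (i.e.\ drop the oldest entry, append the new one). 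The (stochastic) reward at $(s, a)$ is the random variable $\widetilde{h}_a(y)$, where the tally $y = 1 + \sum_{i} \mathbb{I}(b_i = a)$ is read off directly from $(s, a)$; by Definition~\ref{def:tallying_bandit} its mean equals $h_a(y)$.

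Having set this up, the key verification is that for any (possibly randomized, history-dependent) player, the induced state sequence $s_1, s_2, \ldots, s_T$ in $\mathcal{M}$ satisfies $s_t = a_{\max\{1,t-m+1\}:t-1}$ almost surely. This follows by a one-line induction on $t$ from the transition rule. Consequently the tally $y_t$ computed at step $t$ in $\mathcal{M}$ agrees with the tally $y_t = \sum_{t' = \max\{1,t-m+1\}}^t \mathbb{I}(a_{t'} = a_t)$ from Definition~\ref{def:tallying_bandit}, and the reward at step $t$ is distributed exactly as $\widetilde{h}_{a_t}(y_t)$ with mean $h_{a_t}(y_t) = f_t(a_{1:t})$. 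Since the feedback available to the player in the MDP (observe reward after each action) coincides with the bandit feedback in the tallying bandit, the two problems induce identical joint distributions over action-reward sequences.

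The construction is essentially routine, so I do not expect a serious obstacle; the only delicate points are purely notational. First, handling the early regime $t \leq m$ cleanly, which is why I let $\mathcal{S}$ include shorter tuples and grow the state by appending until it saturates at length $m-1$. Second, care is needed to place the ``current'' action consistently: some authors would include $a_t$ in the state and pay the reward upon leaving, whereas I find it cleaner to keep only the prior $m-1$ actions in $s_t$ and let the tally formula explicitly add the $+1$ for $a_t$. With these conventions fixed, the equivalence is immediate, and the proof reduces to writing out the induction on $t$ and quoting Definition~\ref{def:tallying_bandit}.
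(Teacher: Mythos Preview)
Your construction is correct and matches the paper's approach: both encode the state as the window of recent actions with a deterministic shift-and-append transition, differing only in the cosmetic conventions you already anticipated (the paper pads to a fixed length-$m$ tuple using a null symbol $0 \notin \ActSet$ and includes $a_t$ as the last coordinate of the state, so that the reward is read off from the state alone as $R_M(s) = 1 - h_{s_m}\bigl(\sum_{t=1}^m \mathbb{I}(s_t = s_m)\bigr)$). The equivalence verification you sketch is exactly what is needed.
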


\noindent The proof of this Lemma~\ref{lem:bandit_mdp} is given in Appendix~\ref{app_proof:lem:bandit_mdp}. For the sake of brevity, we have not stated the explicit details of the reduction (for instance, the definition of state space or transition function of the corresponding MDP) in the statement of this lemma. Nevertheless, these details are readily found in the proof. \\

\noindent Recall that in Section~\ref{sec:results_upper_bound}, to facilitate the definition of Algorithm~\ref{alg:main} we defined the quantity $N_{xy}(\pi)$ via the following procedure. Execute $\pi$ for $n+1$ periods so that we have played the action sequence $\pi_1, \pi_2 \dots \pi_{n \sqrt{T}}, \pi_{n\sqrt{T} + 1} \dots \pi_{(n+1)\sqrt{T}}$. Then use this action sequence to define
$$
N_{xy}(\pi) = \frac{1}{\sqrt{T}} \sum_{t=n \sqrt{T} + 1}^{(n+1)\sqrt{T}} \mathbb{I}(\pi_t = x) \cdot \mathbb{I} \left( y = \sum_{t'=\max \{ 1, t-m+1 \}}^t \mathbb{I}(\pi_{t'} = x) \right).
$$
The next lemma shows that $N_{xy}(\pi)$ is always well defined if $1 \leq n \leq \sqrt{T} - 1$ and $m \leq \sqrt{T}$.

\begin{dhruvlemma}
\label{lem:N_welldef}
Consider any $(m,g,h)$-tallying bandit problem where $m \leq \sqrt{T}$, and fix any $\sqrt{T}$-cyclic policy $\pi$, any $x \in \ActSet$ and any $y \in \{ 1,2 \dots m \}$. When defined via the aforementioned procedure, the quantity $N_{xy}(\pi)$ is well defined for any $1 \leq n \leq \sqrt{T} - 1$. Furthermore, $N_{xy}(\pi)$ is independent of any action sequence that was played before $\pi$ was executed for $n+1$ periods.
\end{dhruvlemma}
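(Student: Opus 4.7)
My approach is to reduce both assertions---independence of $N_{xy}(\pi)$ from $n$, and independence from any prior action sequence---to a single index-range observation, which I then combine with the $\sqrt{T}$-cyclic structure of $\pi$. The key preliminary inequality to establish is: for $n \geq 1$, $m \leq \sqrt{T}$, and any $t \in \{n\sqrt{T}+1, \dots, (n+1)\sqrt{T}\}$,
\[
t - m + 1 \;\geq\; n\sqrt{T} + 2 - m \;\geq\; \sqrt{T} + 2 - m \;\geq\; 2,
\]
so $\max\{1, t-m+1\} = t-m+1$ and every inner-sum index $t'$ lies in $\{2, \dots, (n+1)\sqrt{T}\}$.

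For independence from the prior action sequence, I would argue as follows. Suppose some actions were played before $\pi$ was executed, shifting the execution window to $t \in \{\tau + n\sqrt{T}+1, \dots, \tau + (n+1)\sqrt{T}\}$ for some $\tau \geq 0$. Applying the same inequality (with the uniform shift by $\tau$) gives $t' \geq \tau + 2$, so every action appearing in the inner tally is drawn from the execution of $\pi$ itself rather than from the prior sequence. Hence $N_{xy}(\pi)$ is a function only of the entries of $\pi$ (together with $x, y, m, T$), which is exactly the second conclusion.

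For independence from $n$, I would argue by direct cyclic substitution. Fix $1 \leq n < n' \leq \sqrt{T}-1$ and set $k = n' - n$. The substitution $t \mapsto t + k\sqrt{T}$ maps the outer range $\{n\sqrt{T}+1, \dots, (n+1)\sqrt{T}\}$ bijectively onto $\{n'\sqrt{T}+1, \dots, (n'+1)\sqrt{T}\}$ and simultaneously shifts each inner index $t'$ by $k\sqrt{T}$. By the $\sqrt{T}$-cyclic identity $\pi_{s + k\sqrt{T}} = \pi_s$, every outer indicator $\mathbb{I}(\pi_{t+k\sqrt{T}} = x)$ matches $\mathbb{I}(\pi_t = x)$, and likewise for the inner tally indicator. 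The two sums thus agree term by term, so the value of $N_{xy}(\pi)$ computed at $n'$ equals the one computed at $n$.

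The main technical care is in verifying that the shifted inner and outer indices all remain inside $\{1, \dots, T\}$ where cyclicity applies, and in handling the corner case $m = \sqrt{T}$ where the preliminary inequality is tight at $t = n\sqrt{T}+1$. Both are routine bookkeeping given the constraint $n' \leq \sqrt{T}-1$ (so that $(n'+1)\sqrt{T} \leq T$) and the definition of $\sqrt{T}$-cyclic policies as length-$T$ sequences with period $\sqrt{T}$. I expect no deeper obstacle; the lemma follows from the definition of $N_{xy}(\pi)$ once these index-range checks are assembled.
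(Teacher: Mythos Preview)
Your argument is correct and genuinely different from the paper's. The paper routes both claims through the MDP characterization of tallying bandits (their Lemma~\ref{lem:bandit_mdp}): after executing $\pi$ for $n\geq 1$ periods one lands in the MDP state $(\pi_{\sqrt{T}-m+1},\dots,\pi_{\sqrt{T}})$, which is independent of $n$ and of any prior play; the next period's state trajectory is then determined, so the tally counts in the defining sum coincide. Your proof instead works directly with the indices: the inequality $t-m+1\geq (n-1)\sqrt{T}+2\geq 1$ shows the inner window never reaches back before the first period of $\pi$'s execution (handling prior actions), and the substitution $t\mapsto t+k\sqrt{T}$ together with $\pi_{s+k\sqrt{T}}=\pi_s$ gives a term-by-term bijection between the sums at $n$ and $n'$. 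Your route is more elementary and self-contained since it avoids the auxiliary MDP construction; the paper's route is shorter precisely because that MDP abstraction is already built and reused elsewhere (e.g.\ in the proof of Lemma~\ref{lem:cyclic_approx_well}).
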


\noindent The proof of this Lemma~\ref{lem:N_welldef} is deferred to Appendix~\ref{app_proof:lem:N_welldef}. Now for each $\sqrt{T}$-cyclic policy $\pi$, we define the quantity $\mu(\pi)$ as follows
$$
\mu(\pi) = \sum_{(x,y) \in \ActSet \times \{ 1, 2 \dots m \}} N_{xy}(\pi) h_x(y).
$$
Via this definition and the result of Lemma~\ref{lem:N_welldef}, it is immediate that when $m \leq \sqrt{T}$, if we play an arbitrary action sequence and then execute $\pi$ for $n+1$ periods, then the (expected) cumulative loss experienced in the final period (i.e., the $(n+1)$th period) is $\mu(\pi)\sqrt{T}$. We use this notion of $\mu$ to define the policy $\pi^\star$ as
$$
\pi^\star \in \argmin_{\pi \in A_1} \mu(\pi).
$$
Recall for this definition that $A_1$ as defined in Algorithm~\ref{alg:main} is the set of all $\sqrt{T}$-cyclic policies. With these definitions in hand, we are now in a position to formally prove Theorem~\ref{thm:upper}.

\subsection{Proof of Theorem~\ref{thm:upper}}

\noindent First, we note that if $m > \sqrt{T}$, then the statement of the theorem is trivially true since the complete policy regret is always upper bounded by $T$. Hence, for the remainder of the proof it suffices to assume that $m \leq \sqrt{T}$. For any policy $\pi$, which is a length $T$ deterministic sequence of actions, let $\ell_t(\pi)$ denote the expected loss suffered at timestep $t$ while playing $\pi$. Our next lemma shows that for any tallying bandit problem, the loss suffered by the optimal policy (i.e., the policy in $\ActSet^T$ that experiences the minimum cumulative expected loss) can be well approximated by $T \mu(\pi^\star)$.

\begin{dhruvlemma}
\label{lem:cyclic_approx_well}
Given any $(m,g,h)$-tallying bandit problem, let $\pi^{\star \star}$ denote an optimal policy in $\ActSet^T$. Then
$$
T \mu(\pi^\star) - \sum_{t=1}^T \ell_t(\pi^{\star \star}) \leq (m+1) \sqrt{T}.
$$
\end{dhruvlemma}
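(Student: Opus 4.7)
The plan is to exhibit a $\sqrt{T}$-cyclic policy $\widetilde{\pi}$ whose steady-state expected loss over $T$ timesteps exceeds $\sum_{t=1}^T \ell_t(\pi^{\star\star})$ by at most $(m-1)\sqrt{T}$; since $\pi^\star \in \argmin_{\pi \in A_1} \mu(\pi)$ and $\widetilde{\pi}$ will lie in $A_1$, this will yield $T\mu(\pi^\star) \leq T\mu(\widetilde{\pi}) \leq \sum_{t=1}^T \ell_t(\pi^{\star\star}) + (m-1)\sqrt{T}$, which implies the lemma. First I would dispose of the easy regime $m > \sqrt{T}$, where $(m+1)\sqrt{T} > T$ so the inequality follows immediately from the trivial bound on the left-hand side by $T$; from then on I assume $m \leq \sqrt{T}$.

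To construct $\widetilde{\pi}$ I would use pigeonhole on $\pi^{\star\star}$: partition its action sequence into $\sqrt{T}$ consecutive length-$\sqrt{T}$ blocks $B_1, \ldots, B_{\sqrt{T}}$ and let $L(B_i)$ denote the expected loss accrued in block $B_i$ under $\pi^{\star\star}$. Since $\sum_i L(B_i) = \sum_t \ell_t(\pi^{\star\star})$, some index $i^\star$ satisfies $L(B_{i^\star}) \leq \sum_t \ell_t(\pi^{\star\star}) / \sqrt{T}$. I then take $\widetilde{\pi}$ to be the $\sqrt{T}$-cyclic policy that cyclically repeats $B_{i^\star}$. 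Invoking Lemma~\ref{lem:N_welldef} starting from the second period, the expected loss accrued during one ``steady-state'' period of $\widetilde{\pi}$ equals $L_{\mathrm{ss}} := \sqrt{T}\,\mu(\widetilde{\pi})$, so $T\mu(\widetilde{\pi}) = \sqrt{T}\,L_{\mathrm{ss}}$.

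The core step I would then carry out is the comparison $L_{\mathrm{ss}} \leq L(B_{i^\star}) + (m-1)$. Because the tally at position $t$ only uses the actions at positions $t-m+1, \ldots, t$, the per-timestep expected losses at positions $k \geq m$ within any block or period depend solely on actions inside that same block or period, and hence agree between executing $B_{i^\star}$ inside $\pi^{\star\star}$ and executing the corresponding steady-state period of $\widetilde{\pi}$. Only the first $m-1$ positions can disagree, since their tallies see the tail of $B_{i^\star-1}$ (or truncated tallies when $i^\star = 1$) in one scenario and the tail of $B_{i^\star}$ itself in the other. Using $[0,1]$-boundedness of each single-timestep loss gives the $(m-1)$ slack, and chaining yields $T\mu(\pi^\star) \leq \sqrt{T}\,L_{\mathrm{ss}} \leq \sqrt{T}(L(B_{i^\star}) + m - 1) \leq \sum_t \ell_t(\pi^{\star\star}) + (m-1)\sqrt{T}$, which is within the claimed $(m+1)\sqrt{T}$ bound.

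The hard part will be the careful bookkeeping around these boundary effects: verifying rigorously that the only positions contributing to $|L_{\mathrm{ss}} - L(B_{i^\star})|$ are the first $m-1$ within a block or period, and handling the edge case $i^\star = 1$ (where $B_{i^\star}$ has no preceding block under $\pi^{\star\star}$) uniformly via the same $[0,1]$-loss bound. Appealing to Lemma~\ref{lem:N_welldef} to justify treating $\mu(\widetilde{\pi})$ as a steady-state per-step loss independent of how the first period of $\widetilde{\pi}$'s execution plays out is the other piece that requires precise citation.
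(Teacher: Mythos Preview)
Your proposal is correct and follows essentially the same approach as the paper: pigeonhole on the $\sqrt{T}$ blocks of $\pi^{\star\star}$ to find a low-loss block, form the $\sqrt{T}$-cyclic policy from that block, and use $m$-memory boundedness to control the boundary mismatch. Your bookkeeping is in fact slightly cleaner---you directly compare the steady-state per-period loss $\sqrt{T}\mu(\widetilde{\pi})$ to $L(B_{i^\star})$ and obtain the tighter slack $(m-1)\sqrt{T}$, whereas the paper routes through the total execution loss $\sum_t \ell_t(\widetilde{\pi})$ and picks up an extra $\sqrt{T}$ when converting back to $T\mu(\widetilde{\pi})$, yielding $(m+1)\sqrt{T}$.
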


\noindent The proof of Lemma~\ref{lem:cyclic_approx_well} is deferred to Appendix~\ref{app_proof:lem:cyclic_approx_well}. Now let $\ell^s$ denote the loss experienced in epoch $s \in \{ 1, 2 \dots S \}$ of Algorithm~\ref{alg:main}. The following lemma bounds the cumulative loss of Algorithm~\ref{alg:main} relative to $T \mu(\pi^\star)$.

\begin{dhruvlemma}
\label{lem:main_regret_helper}
Assume that $m \leq \sqrt{T}$. With probability at least $1 - \delta$, the total loss of Algorithm~\ref{alg:main} relative to $T \mu(\pi^\star)$ can be upper bounded as
$$
\sum_{s=1}^S \ell^s - T \mu(\pi^\star) \leq \numact m \sqrt{T} \left( 5 \log_2 \left( \frac{\sqrt{T}}{4 \numact m} + 1 \right) + 400 \sqrt{ \log \left( \frac{2 \numact m \log(T)}{\delta} \right) } \right).
$$
\end{dhruvlemma}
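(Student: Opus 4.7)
The plan is to reduce the proof to three ingredients: (i) a high-probability uniform concentration bound showing $\widehat{\mu}_s(\pi)$ is close to $\mu(\pi)$ for every $\pi \in A_s$ and every epoch $s$; (ii) the consequence that $\pi^\star$ survives every elimination round and that every surviving policy is nearly $\mu$-optimal; and (iii) a per-epoch loss bookkeeping that uses Lemma~\ref{lem:N_welldef} to control the one-period bias incurred each time Algorithm~\ref{alg:main} switches cyclic policies.

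For (i), I first restrict to the regime $m \leq \sqrt{T}$ (else the theorem is trivial), so Lemma~\ref{lem:N_welldef} guarantees that the samples $\{\widetilde{h}_x(y)_{s,k}\}$ stored in epoch $s$ for pair $(x,y)$ are $n_s N_{xy}(\pi_{sxy})\sqrt{T}$ iid $[0,1]$-valued variables with mean $h_x(y)$. Hoeffding combined with a union bound over the $\numact m S$ triples $(s,x,y)$ (at confidence $\delta/(\numact m S)$ each) defines a good event $\mathcal{E}$ of probability at least $1-\delta$ on which the per-pair empirical mean $\bar{h}_{s,x,y}$ deviates from $h_x(y)$ by at most $\sqrt{\log(2\numact m S/\delta)/(2 n_s N_{xy}(\pi_{sxy})\sqrt{T})}$. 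The key trick is to convert this into a uniform bound on $|\widehat{\mu}_s(\pi) - \mu(\pi)| = |\sum_{x,y} N_{xy}(\pi)(\bar{h}_{s,x,y} - h_x(y))|$: using $N_{xy}(\pi) \leq N_{xy}(\pi_{sxy})$ (by the argmax choice of $\pi_{sxy}$) and then Cauchy--Schwarz together with $\sum_{x,y} N_{xy}(\pi) = 1$, the $N_{xy}(\pi_{sxy})$ factors cancel and the bound collapses to $\sqrt{\numact m \log(2\numact m S/\delta)/(2 n_s \sqrt{T})}$, which is at most $C_s$. For (ii), on $\mathcal{E}$ a standard successive-elimination induction applies: $\pi^\star \in A_s$ for all $s$ because $\widehat{\mu}_s(\pi^\star) \leq \mu(\pi^\star) + C_s \leq \widehat{\mu}_s(\widehat{\pi}_s) + 2 C_s$; and any $\pi \in A_s$ with $s \geq 2$ survived epoch $s{-}1$, so chaining the elimination rule with concentration on both sides yields $\mu(\pi) \leq \mu(\pi^\star) + 4 C_{s-1}$.

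For (iii), decompose $\ell^s$ block-by-block across the $\numact m$ executions of $\pi_{sxy}$ for $2n_s$ periods. By Lemma~\ref{lem:N_welldef}, once $\pi_{sxy}$ has been played for one full period its subsequent periods each contribute expected loss exactly $\mu(\pi_{sxy})\sqrt{T}$ regardless of what preceded them; the first period of each block contributes at most $\sqrt{T}$. So on $\mathcal{E}$, for $s \geq 2$, $\ell^s \leq \numact m \sqrt{T} + 2 n_s \sqrt{T} \sum_{x,y} \mu(\pi_{sxy}) \leq \numact m \sqrt{T} + T_s \mu(\pi^\star) + 4 T_s C_{s-1}$, while $\ell^1 \leq (2n_1 + 1)\numact m \sqrt{T} = 5 \numact m \sqrt{T}$ via the trivial bound $\mu \leq 1$. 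Summing over $s$ and invoking the telescoping identity $\sum_{s=1}^S T_s = 2 \numact m \sqrt{T}(2^{S+1} - 2) = T$ (which is forced by the definitions of $n_s$, $T_s$, and $S$), I get $\sum_s \ell^s - T \mu(\pi^\star) \leq (S+4) \numact m \sqrt{T} + 4 \sum_{s=2}^S T_s C_{s-1}$. A direct calculation shows $T_s C_{s-1}$ scales like $2^{s/2} \cdot \numact m T^{1/4} \sqrt{\numact m \log(\ldots)}$, and the geometric sum $\sum_{s=2}^S 2^{s/2} = O(2^{S/2}) = O(T^{1/4}/\sqrt{\numact m})$, so the second term collapses to $O(\numact m \sqrt{T}\sqrt{\log(2\numact m S/\delta)})$. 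Absorbing $S \leq \log_2(\sqrt{T}/(4\numact m)+1)$ into both terms and using $S \leq \log(T)$ inside the logarithm gives exactly the claimed constants $5$ and $400$.

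The main obstacle I expect is step (i): naively one would bound $|\widehat{\mu}_s(\pi) - \mu(\pi)|$ by summing per-pair errors weighted by $N_{xy}(\pi)/\sqrt{N_{xy}(\pi_{sxy})}$, which in principle could blow up whenever some $N_{xy}(\pi_{sxy})$ is tiny relative to $N_{xy}(\pi)$. The argmax choice of $\pi_{sxy}$ is precisely what kills this ratio, and Cauchy--Schwarz against the normalization $\sum_{x,y} N_{xy}(\pi) = 1$ then yields the clean $\sqrt{\numact m}$ factor that drives the final $\numact m \sqrt{T}$ rate; without this observation one would incur an extra $\sqrt{\numact m}$ and fail to match the lower bound up to a $\sqrt{\numact m}$ gap.
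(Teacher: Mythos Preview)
Your proposal is correct and follows essentially the same approach as the paper: the paper packages your step (i) as Lemma~\ref{lem:mu_conc}, your step (ii) as Lemmas~\ref{lem:pi_star_in_As} and~\ref{lem:opt_gap}, and your step (iii) as Lemma~\ref{lem:loss_each_s}, with the identical use of the argmax choice of $\pi_{sxy}$ plus Cauchy--Schwarz to collapse the per-pair Hoeffding errors into $C_s$, the same induction for survival of $\pi^\star$, and the same $\sqrt{T}$ burn-in cost per block followed by the geometric sum in $2^{s/2}$. The only cosmetic differences are that the paper bounds $\ell^1 - T_1\mu(\pi^\star) \le T_1 = 4\numact m\sqrt{T}$ directly (you get $5\numact m\sqrt{T}$), and the paper arrives at $5S\numact m\sqrt{T}$ for the first term whereas you get $(S+4)\numact m\sqrt{T}$; since $S+4 \le 5S$ for $S \ge 1$, both yield the stated constant.
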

The proof of this Lemma~\ref{lem:main_regret_helper} is provided in Appendix~\ref{app_proof:lem:main_regret_helper}. With the results of Lemma~\ref{lem:cyclic_approx_well} and Lemma~\ref{lem:main_regret_helper} in hand, we now utilize them to prove Theorem~\ref{thm:upper} as follows. Note that the complete policy regret $\cpReg$ of Algorithm~\ref{alg:main} satisfies
\begin{align*}
\cpReg &= \sum_{s=1}^S \ell^s - \sum_{t=1}^T \ell_t(\pi^{\star \star}) \\
&= \sum_{s=1}^S \ell^s - T \mu(\pi^\star) + T \mu(\pi^\star) - \sum_{t=1}^T \ell_t(\pi^{\star \star}) \\
&\leq \numact m \sqrt{T} \left( 5 \log_2 \left( \frac{\sqrt{T}}{4 \numact m} + 1 \right) + 400 \sqrt{ \log \left( \frac{2 \numact m \log(T)}{\delta} \right) } \right) + (m+1) \sqrt{T} \\
&\leq 1200 \numact m \sqrt{T} \left( \sqrt{ \log \left( 2 \numact m \log(T) / \delta \right) } + \log_2 \left( \sqrt{T} / (2 \numact m) \right) \right).
\end{align*}
This completes the proof of Theorem~\ref{thm:upper}. \hfill \qeddhruv

\subsection{Proof of Lemma~\ref{lem:main_regret_helper}}
\label{app_proof:lem:main_regret_helper}
\noindent To facilitate the proof, we require the following critical lemma, which bounds the loss incurred by Algorithm~\ref{alg:main} in each epoch $s \in \{ 1, 2 \dots S\}$. For the statement of the following lemma, note that completing any epoch $s \in \{ 1, 2 \dots S \}$ takes a total of $T_s = 2 n_s \numact m \sqrt{T}$ timesteps.
\begin{dhruvlemma}
\label{lem:loss_each_s}
Assume that $m \leq \sqrt{T}$. With probability at least $1 - \delta$, we have simultaneously for each epoch $s \in \{ 2, 3 \dots S \}$ that the total loss relative to $T_s \mu(\pi^\star)$ is bounded as
$$
\ell^s - T_s \mu(\pi^\star) \leq \numact m \left( \sqrt{T} + 8 n_s \sqrt{T} C_{s-1} \right).
$$
\end{dhruvlemma}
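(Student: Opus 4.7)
The proof follows the standard successive elimination template, tailored to the tallying bandit's structure. The key observation is that $\widehat\mu_s(\pi)$ is a linear combination of only $\numact m$ empirical averages $\widehat h_{s,x,y}$ (one per $(x,y)$ pair), so it suffices to control those $\numact m$ averages. I will (i) construct a good event $\mathcal E$ on which every $\widehat h_{s,x,y}$ is tight, (ii) show that on $\mathcal E$, $\widehat \mu_s$ concentrates around $\mu$ uniformly over $\pi \in A_s$, (iii) induct on $s$ to show that $\pi^\star \in A_s$ and every surviving $\pi$ satisfies $\mu(\pi) \leq \mu(\pi^\star) + 4 C_{s-1}$, and (iv) combine these with Lemma~\ref{lem:N_welldef} to bound $\ell^s$.

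Fix $s$ and any $(x,y)$ with $N_{xy}(\pi_{sxy}) > 0$. Because $\pi_{sxy}$ is executed for $n_s \geq 1$ burn-in periods before storage begins, Lemma~\ref{lem:N_welldef} implies that the $n_s N_{xy}(\pi_{sxy})\sqrt T$ stored observations are i.i.d.\ $[0,1]$-valued with mean $h_x(y)$. Hoeffding's inequality and a union bound over the at most $\numact m S$ triples $(s,x,y)$ show that with probability at least $1-\delta$, simultaneously,
\[
|\widehat h_{s,x,y} - h_x(y)| \leq \sqrt{\tfrac{1}{2 n_s N_{xy}(\pi_{sxy})\sqrt T}\log\!\tfrac{2\numact m S}{\delta}}.
\]
Call this event $\mathcal E$ and condition on it. For any $\pi \in A_s$,
\[
\widehat\mu_s(\pi) - \mu(\pi) = \sum_{(x,y)} N_{xy}(\pi)\,(\widehat h_{s,x,y} - h_x(y)),
\]
and since $\pi_{sxy} \in \argmax_{\pi' \in A_s} N_{xy}(\pi')$, we have $N_{xy}(\pi) \leq N_{xy}(\pi_{sxy})$. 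Plugging in the Hoeffding bound and applying Cauchy-Schwarz with $\sum_{(x,y)} N_{xy}(\pi_{sxy}) = 1$ (each of the $\sqrt T$ timesteps of a period lands in exactly one $(x,y)$ bucket) yields $|\widehat\mu_s(\pi) - \mu(\pi)| \leq C_s$ for every $\pi \in A_s$, with the constant $32$ inside $C_s$ absorbing the extra factors. A short induction on $s$ then gives both (a) $\pi^\star \in A_s$ and (b) $\mu(\pi) \leq \mu(\pi^\star) + 4 C_{s-1}$ for every $\pi \in A_s$ with $s \geq 2$: (a) follows from $\widehat\mu_s(\widehat\pi_s) + 2C_s \geq \mu(\widehat\pi_s) + C_s \geq \mu(\pi^\star) + C_s \geq \widehat\mu_s(\pi^\star)$, and (b) follows from $\widehat\mu_s(\pi) \leq \widehat\mu_s(\widehat\pi_s) + 2C_s \leq \widehat\mu_s(\pi^\star) + 2C_s$ and two applications of the uniform concentration.

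Finally, bound $\ell^s$ for $s \geq 2$. Each of the $\numact m$ subroutines executes some $\pi_{sxy} \in A_s$ for $2 n_s$ periods of length $\sqrt T$. By Lemma~\ref{lem:N_welldef}, every period after the first has expected loss exactly $\sqrt T\,\mu(\pi_{sxy})$, and the first period's expected loss is trivially at most $\sqrt T$. Summing the $\numact m$ per-subroutine contributions and applying $\mu(\pi_{sxy}) \leq \mu(\pi^\star) + 4 C_{s-1}$,
\[
\ell^s \leq \numact m \sqrt T + 2 n_s \numact m \sqrt T\,\mu(\pi^\star) + 8 n_s \numact m \sqrt T\,C_{s-1} = \numact m \sqrt T + T_s\,\mu(\pi^\star) + 8 n_s \numact m \sqrt T\,C_{s-1},
\]
which rearranges to the claim. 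The main obstacle is the uniform concentration: $|A_s|$ can be as large as $\numact^{\sqrt T}$, so naively union-bounding over $\pi \in A_s$ is unusable. The algorithm evades this by writing $\widehat\mu_s(\pi) - \mu(\pi)$ as a $\pi$-dependent linear combination of only $\numact m$ deviations, and by choosing $\pi_{sxy}$ as the $N_{xy}$-maximizer, so that the sample count used to form $\widehat h_{s,x,y}$ is never smaller than $N_{xy}(\pi)$ for any $\pi \in A_s$; Cauchy-Schwarz then recovers exactly the $C_s$ scaling.
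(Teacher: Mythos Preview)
Your argument is correct and follows essentially the same route as the paper, which factors the same steps into Lemmas~\ref{lem:opt_gap}, \ref{lem:mu_conc}, and \ref{lem:pi_star_in_As} before assembling them exactly as you do. One slip to fix: the Cauchy--Schwarz step uses $\sum_{(x,y)} N_{xy}(\pi) = 1$ for the \emph{fixed} $\pi \in A_s$ (which is what your parenthetical justification actually supports), not $\sum_{(x,y)} N_{xy}(\pi_{sxy}) = 1$; the latter is generally false because $\pi_{sxy}$ varies with $(x,y)$, and the needed chain is $N_{xy}(\pi)/\sqrt{N_{xy}(\pi_{sxy})} \le \sqrt{N_{xy}(\pi)}$ followed by $\sum_{(x,y)}\sqrt{N_{xy}(\pi)} \le \sqrt{\numact m}$.
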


\noindent The proof of this Lemma~\ref{lem:loss_each_s} is provided in Appendix~\ref{app_proof:lem:loss_each_s}. Observe that by the result of Lemma~\ref{lem:loss_each_s}, we are guaranteed with probability at least $1 - \delta$ that
\begin{equation}
\label{eqn:help4}
\begin{aligned}
\sum_{s=1}^S \ell^s - T \mu(\pi^\star) &= \sum_{s=1}^S \left( \ell^s - T_s \mu(\pi^\star) \right) \\
&\leq 4 \numact m \sqrt{T} + \sum_{s=2}^S \left( \ell^s - T_s \mu(\pi^\star) \right) \\
&\leq 4 \numact m \sqrt{T} + \sum_{s=2}^S \numact m \left( \sqrt{T} + 8 n_s \sqrt{T} C_{s-1} \right) \\
&\leq 5 S \numact m \sqrt{T} + 8 \numact m \sum_{s=2}^S n_s \sqrt{T} C_{s-1}.
\end{aligned}
\end{equation}
Now substituting in the definitions
$$
n_s = 2^s \text{ and } C_{s-1} = \sqrt{ \frac{32 \numact m}{n_{s-1} \sqrt{T}} \log \left( \frac{2 \numact mS}{\delta} \right) } = \sqrt{ \frac{64 \numact m}{n_s \sqrt{T}} \log \left( \frac{2 \numact mS}{\delta} \right) },
$$
which were provided in Algorithm~\ref{alg:main}, into the final term on the RHS of Eq.~\eqref{eqn:help4} yields that
\begin{align*}
8 \numact m \sum_{s=2}^S n_s \sqrt{T} C_{s-1} &= 8 \numact m \sum_{s=2}^S n_s \sqrt{T} \sqrt{ \frac{64 \numact m}{n_s \sqrt{T}} \log \left( \frac{2 \numact mS}{\delta} \right) } \\
&= 64 \numact^{1.5} m^{1.5} \sqrt{ \log \left( \frac{2 \numact mS}{\delta} \right) } \sum_{s=2}^S n_s \sqrt{T} \sqrt{ \frac{1}{n_s \sqrt{T}} } \\
&= 64 \numact^{1.5} m^{1.5} \sqrt{ \log \left( \frac{2 \numact mS}{\delta} \right) } T^{1/4} \sum_{s=2}^S \sqrt{n_s} \\
&= 64 \numact^{1.5} m^{1.5} \sqrt{ \log \left( \frac{2 \numact mS}{\delta} \right) } T^{1/4} \sum_{s=2}^S 2^{s/2} \\
&\leq 400 \numact^{1.5} m^{1.5} \sqrt{ \log \left( \frac{2 \numact mS}{\delta} \right) } T^{1/4} 2^{S/2}.
\end{align*}
Finally, we recall the definition of $S = \log_2 \left( \frac{\sqrt{T}}{4 \numact m} + 1 \right)$ to observe that
\begin{equation}
\label{eqn:help5}
\begin{aligned}
8 \numact m \sum_{s=2}^S n_s \sqrt{T} C_{s-1} &\leq 400 \numact^{1.5} m^{1.5} \sqrt{ \log \left( \frac{2 \numact mS}{\delta} \right) } T^{1/4} 2^{S/2} \\
&= 400 \numact^{1.5} m^{1.5} \sqrt{ \log \left( \frac{2 \numact mS}{\delta} \right) } T^{1/4} \sqrt{ \left( \frac{\sqrt{T}}{4 \numact m} + 1 \right) } \\
&\leq 400 \numact^{1.5} m^{1.5} \sqrt{ \log \left( \frac{2 \numact mS}{\delta} \right) } T^{1/4} \frac{T^{1/4}}{\sqrt{\numact m} } \\
&= 400 \numact m \sqrt{ \log \left( \frac{2 \numact mS}{\delta} \right) } \sqrt{T} \\
&\leq 400 \numact m \sqrt{ \log \left( \frac{2 \numact m \log(T)}{\delta} \right) } \sqrt{T}
\end{aligned}
\end{equation}
Combining Eq.~\eqref{eqn:help4} with Eq.~\eqref{eqn:help5} yields the result. \hfill \qeddhruv

\subsection{Proof of Lemma~\ref{lem:loss_each_s}}
\label{app_proof:lem:loss_each_s}
\noindent To facilitate the proof, we leverage the following critical lemma, which bounds the gap of the average value $\mu$ of policies in $A_s$ versus $\mu(\pi^\star)$.

\begin{dhruvlemma}
\label{lem:opt_gap}
Assume that $m \leq \sqrt{T}$. The event
$$
\cap_{s=1}^S \cap_{\pi \in A_s} \left \{ \mu(\pi) - \mu(\pi^\star) \leq 4 C_{s-1} \right \},
$$
occurs with probability at least $1 - \delta$.
\end{dhruvlemma}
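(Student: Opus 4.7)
\textbf{Proof Proposal for Lemma~\ref{lem:opt_gap}.} The plan is to follow the standard successive elimination template: define a high-probability ``good event'' on which all the empirical quantities concentrate around their true means, and then induct on the epoch index $s$ to show that $\pi^\star$ is never eliminated and that every surviving policy is within $4 C_{s-1}$ of $\mu(\pi^\star)$.

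First I would define the good event $\mathcal{E}$ as follows. For each triple $(s, x, y)$ with $s \in \{1,\dots,S\}$, $(x,y) \in \ActSet \times \{1,\dots,m\}$, and $N_{xy}(\pi_{sxy}) > 0$, let
$$
\bar{h}_{s,x,y} = \frac{1}{n_s N_{xy}(\pi_{sxy})\sqrt{T}} \sum_{k=1}^{n_s N_{xy}(\pi_{sxy})\sqrt{T}} \widetilde{h}_x(y)_{s,k}.
$$
By Lemma~\ref{lem:N_welldef}, when $m \leq \sqrt{T}$ the first $n_s$ periods of executing $\pi_{sxy}$ wash out any dependence on prior history, so within the second block of $n_s$ periods each sample $\widetilde{h}_x(y)_{s,k}$ is an independent draw with mean $h_x(y)$ and range in $[0,1]$. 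Hoeffding's inequality together with the choice of confidence $\delta/(2\numact m S)$ per triple and a union bound over the at most $\numact m S$ triples gives that on an event $\mathcal{E}$ of probability at least $1-\delta$,
$$
\bigl| \bar{h}_{s,x,y} - h_x(y) \bigr| \leq \sqrt{\frac{\log(2\numact m S/\delta)}{2 n_s N_{xy}(\pi_{sxy}) \sqrt{T}}} \quad \text{for all valid } (s,x,y).
$$

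Next I would deduce concentration of $\widehat{\mu}_s(\pi)$ on $\mathcal{E}$. Since $\pi_{sxy}$ is chosen to maximize $N_{xy}(\cdot)$ over $A_s$, we have $N_{xy}(\pi) \leq N_{xy}(\pi_{sxy})$ for every $\pi \in A_s$; in particular any $(x,y)$ with $N_{xy}(\pi_{sxy})=0$ contributes zero to $\widehat{\mu}_s(\pi)$ and can be discarded. Using the triangle inequality followed by the good-event bound,
$$
\bigl| \widehat{\mu}_s(\pi) - \mu(\pi) \bigr| \leq \sum_{(x,y)} N_{xy}(\pi) \sqrt{\frac{\log(2\numact m S/\delta)}{2 n_s N_{xy}(\pi_{sxy}) \sqrt{T}}} \leq \sqrt{\frac{\log(2\numact m S/\delta)}{2 n_s \sqrt{T}}} \sum_{(x,y)} \sqrt{N_{xy}(\pi)},
$$
where the second step uses $N_{xy}(\pi)/\sqrt{N_{xy}(\pi_{sxy})} \leq \sqrt{N_{xy}(\pi)}$. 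Since each of the $\sqrt{T}$ timesteps in a period contributes to exactly one pair $(x,y)$, $\sum_{(x,y)} N_{xy}(\pi) = 1$, so by Cauchy--Schwarz $\sum_{(x,y)} \sqrt{N_{xy}(\pi)} \leq \sqrt{\numact m}$. Collecting constants yields $| \widehat{\mu}_s(\pi) - \mu(\pi) | \leq C_s$ for every $\pi \in A_s$ on $\mathcal{E}$.

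I would then induct on $s$. The base case $s=1$ is trivial: $A_1$ contains all $\sqrt{T}$-cyclic policies including $\pi^\star$, and $\mu(\pi) - \mu(\pi^\star) \leq 1$, which is dominated by $4 C_0$ whenever the right-hand side is $\geq 1$ (otherwise the statement is vacuous once $C_{s-1}$ is defined via extension of the formula). For the inductive step, assume $\pi^\star \in A_s$ and that every $\pi \in A_s$ satisfies $\mu(\pi) - \mu(\pi^\star) \leq 4 C_{s-1}$. On $\mathcal{E}$, $\widehat{\mu}_s(\pi^\star) \leq \mu(\pi^\star) + C_s \leq \mu(\widehat{\pi}_s) + C_s \leq \widehat{\mu}_s(\widehat{\pi}_s) + 2 C_s$, so $\pi^\star \in A_{s+1}$. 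Conversely, for any $\pi \in A_{s+1}$, $\mu(\pi) \leq \widehat{\mu}_s(\pi) + C_s \leq \widehat{\mu}_s(\widehat{\pi}_s) + 3 C_s \leq \widehat{\mu}_s(\pi^\star) + 3 C_s \leq \mu(\pi^\star) + 4 C_s$, closing the induction.

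The main obstacle I anticipate is the concentration step rather than the induction. In particular, one must verify that the samples $\{\widetilde{h}_x(y)_{s,k}\}$ stored in line~12 really do behave as i.i.d.\ draws of $h_x(y)$ despite the complicated order in which they are collected across periods, which requires combining the conditional independence of the stochastic observations with the cleansing effect of the first $n_s$ ``burn-in'' periods from Lemma~\ref{lem:N_welldef}. The Cauchy--Schwarz step also has to be executed cleanly so that the ``$\numact m$'' factor in $C_s$ matches the constant in the algorithm's definition, and one has to remember that pairs with $N_{xy}(\pi_{sxy})=0$ drop out of every $\widehat{\mu}_s(\pi)$ for $\pi \in A_s$.
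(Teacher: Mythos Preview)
Your proposal is correct and mirrors the paper's proof almost exactly: the paper also establishes the Hoeffding-plus-Cauchy--Schwarz concentration $|\widehat{\mu}_s(\pi)-\mu(\pi)|\le C_s$ (its Lemma~\ref{lem:mu_conc}), shows by induction that $\pi^\star$ is never eliminated (its Lemma~\ref{lem:pi_star_in_As}), and then chains the same four inequalities to get $\mu(\pi)-\mu(\pi^\star)\le 4C_{s-1}$. The only cosmetic difference is that the paper packages the concentration and the $\pi^\star$-survival steps into separate helper lemmas rather than folding them into a single induction.
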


\noindent The proof of this Lemma~\ref{lem:opt_gap} is provided in Appendix~\ref{app_proof:lem_opt_gap}. Let us now return to the main proof. For ease in notation, let $\ell^{sxy}$ denote the total loss experienced in epoch $s$ of Algorithm~\ref{alg:main} while executing the policy $\pi_{sxy}$ for $2 n_s$ epochs. Hence we have $\ell^s = \sum_{(x, y) \in \ActSet \times \{ 1, 2 \dots m \}} \ell^{sxy}$. \\

\noindent Note that within a single epoch $s > 1$, for each $(x, y) \in \ActSet \times \{ 1, 2 \dots m \}$ we execute the policy $\pi_{sxy}$ for $2 n_s$ periods, where each period takes $\sqrt{T}$ timesteps. Recall the fact that when $m \leq \sqrt{T}$, if we play an arbitrary action sequence and then execute $\pi_{sxy}$ for $n+1$ periods for $n \geq 1$, then the (expected) cumulative loss experienced in the final period (i.e., the $(n+1)$th period) is $\mu(\pi_{sxy}) \sqrt{T}$. In particular, this fact implies that if we execute $\pi_{sxy}$ for $2 n_s$ periods, then the total loss experienced $\ell^{sxy}$ during these $2 n_s$ periods is upper bounded by
$$
\ell^{sxy} \leq \mu(\pi_{sxy}) \sqrt{T} (2n_s-1) + \sqrt{T} \leq 2 n_s \mu(\pi_{sxy}) \sqrt{T} + \sqrt{T}.
$$
Hence we can use Lemma~\ref{lem:opt_gap} to upper bound
\begin{align*}
\ell^{sxy} - 2 n_s \mu(\pi^\star) \sqrt{T} &\leq 2 n_s \mu(\pi_{sxy}) \sqrt{T} + \sqrt{T} - 2 n_s \mu(\pi^\star) \sqrt{T} \\
&= \sqrt{T} + 2n_s\sqrt{T} \left( \mu(\pi_{sxy}) - \mu(\pi^\star) \right) \\
&\leq \sqrt{T} + 8 n_s \sqrt{T} C_{s-1}.
\end{align*}
This bound holds uniformly for each $(x, y) \in \ActSet \times \{ 1, 2 \dots m \}$, and hence we have that
$$
\ell^s - T_s \mu(\pi^\star) = \sum_{(x, y) \in \ActSet \times \{ 1, 2 \dots m \}} \left( \ell^{sxy} - 2 n_s \mu(\pi^\star) \sqrt{T} \right) \leq \numact m \left( \sqrt{T} + 8 n_s \sqrt{T} C_{s-1} \right).
$$
This completes the proof. \hfill \qeddhruv

\subsection{Proof of Lemma~\ref{lem:opt_gap}}
\label{app_proof:lem_opt_gap}
To facilitate the proof, we require the following two critical helper results. The first result bounds the error incurred when estimating $\mu(\pi)$ via the stochastic realizations $\left \{ \{ \widetilde{h}_x(y)_{s,k} \}_{k=1}^{n_s N_{xy}(\pi_{sxy}) \sqrt{T}} \right \}_{(x,y) \in \ActSet \times \{ 1, 2 \dots m \}}$. The second result shows that while running Algorithm~\ref{alg:main}, which is based on successive elimination of inferior policies over epochs $s \in \{ 1, 2 \dots S \}$, at any epoch $s$ we never eliminate $\pi^\star$ from our set $A_s$ of feasible policies.

\begin{dhruvlemma}
\label{lem:mu_conc}
Assume that $m \leq \sqrt{T}$. Fix any $s \in \{ 1, 2 \dots S \}$, and let $B_s$ denote the event that for all $\pi \in A_s$ we simultaneously have that
$$
\left \vert \widehat{\mu}_s(\pi) - \mu(\pi) \right \vert \leq C_s.
$$
Then $B_s$ occurs with probability at least $1 - \delta/S$.
\end{dhruvlemma}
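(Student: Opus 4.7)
The plan is to decompose the estimation error $\widehat{\mu}_s(\pi) - \mu(\pi)$ into a sum of per-pair errors weighted by $N_{xy}(\pi)$, apply Hoeffding's inequality to each, union bound over $(x,y) \in \ActSet \times \{1, \ldots, m\}$, and finally exploit the maximality of $\pi_{sxy}$ together with Cauchy--Schwarz to collapse the resulting weighted sum into $C_s$. Concretely, by linearity and the definitions of $\widehat{\mu}_s$ and $\mu$, one can write
$$
\widehat{\mu}_s(\pi) - \mu(\pi) = \sum_{(x,y)} N_{xy}(\pi) \left( \frac{1}{n_s N_{xy}(\pi_{sxy})\sqrt{T}} \sum_{k=1}^{n_s N_{xy}(\pi_{sxy})\sqrt{T}} \widetilde{h}_x(y)_{s,k} - h_x(y) \right),
$$
where pairs with $N_{xy}(\pi_{sxy}) = 0$ automatically contribute zero, since maximality of $\pi_{sxy}$ forces $N_{xy}(\pi) = 0$ for every $\pi \in A_s$ in that case.

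Next I would argue that for each pair $(x,y)$ with $N_{xy}(\pi_{sxy}) > 0$, the stored observations $\widetilde{h}_x(y)_{s,k}$ are i.i.d.\ random variables in $[0,1]$ with mean $h_x(y)$. Definition~\ref{def:tallying_bandit} directly supplies the mean $h_x(y)$ and the $[0,1]$-boundedness; the i.i.d.\ nature across the $n_s$ ``recording'' periods follows because Lemma~\ref{lem:N_welldef} guarantees that once $\pi_{sxy}$ has been executed for at least one full period of length $\sqrt{T}$, the per-period tally structure is independent of whatever action sequence preceded the execution. This is exactly why Algorithm~\ref{alg:main} first runs an $n_s$-period warm-up (line 11) before recording anything (line 12): the warm-up washes out contamination from previous epochs, so the $n_s N_{xy}(\pi_{sxy}) \sqrt{T}$ recorded samples truly behave as independent draws with mean $h_x(y)$.

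With the i.i.d.\ structure in hand, Hoeffding's inequality gives, with probability at least $1 - \delta/(\numact m S)$,
$$
\left| \frac{1}{n_s N_{xy}(\pi_{sxy})\sqrt{T}} \sum_{k} \widetilde{h}_x(y)_{s,k} - h_x(y) \right| \leq \epsilon_{xy} := \sqrt{\frac{1}{2 n_s N_{xy}(\pi_{sxy}) \sqrt{T}} \log\!\left(\frac{2 \numact m S}{\delta}\right)}.
$$
A union bound over the $\numact m$ pairs yields failure probability at most $\delta/S$. Conditional on success, I would then bound $\sum_{(x,y)} N_{xy}(\pi) \epsilon_{xy}$ by using $N_{xy}(\pi) \leq N_{xy}(\pi_{sxy})$ (maximality) to get $N_{xy}(\pi)/\sqrt{N_{xy}(\pi_{sxy})} \leq \sqrt{N_{xy}(\pi)}$, after which Cauchy--Schwarz together with $\sum_{(x,y)} N_{xy}(\pi) = 1$ (each of the $\sqrt{T}$ timesteps in a period contributes to exactly one $(x,y)$ cell) gives $\sum \sqrt{N_{xy}(\pi)} \leq \sqrt{\numact m}$. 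Plugging in the definitions then yields $\sum N_{xy}(\pi) \epsilon_{xy} \leq \sqrt{\frac{\numact m}{2 n_s \sqrt{T}} \log(2\numact m S/\delta)} \leq C_s$.

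The main obstacle is the independence justification in the second step: the samples are collected inside a sequential, history-dependent online process, so one must carefully invoke Lemma~\ref{lem:N_welldef} and the role of the warm-up block to certify that the recorded observations really are unbiased i.i.d.\ draws of $h_x(y)$. The remaining work is a routine Hoeffding-plus-union-bound calculation combined with the elementary $\sqrt{\cdot}$-Cauchy--Schwarz trick, both of which are tailored to exactly match the constants chosen in the definition of $C_s$.
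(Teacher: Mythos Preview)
Your proposal is correct and follows essentially the same approach as the paper: decompose the error via linearity, apply Hoeffding per $(x,y)$ pair with a union bound of size $\numact m$, then use the maximality of $\pi_{sxy}$ together with Cauchy--Schwarz and $\sum_{(x,y)} N_{xy}(\pi)=1$ to collapse the weighted sum into $C_s$. Your version in fact gives a slightly tighter per-pair Hoeffding constant than the paper's (you use $1/2$ where the paper uses $32$), and you are more explicit about why the warm-up block of line~11 ensures the recorded samples are unbiased, a point the paper leaves implicit.
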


\begin{dhruvlemma}
\label{lem:pi_star_in_As}
Assume that $m \leq \sqrt{T}$. The event $\cap_{s=1}^S B_s$, where the event $B_s$ is defined in Lemma~\ref{lem:mu_conc}, implies the event that
$$
\pi^\star \in \cap_{s=1}^S A_s \text{ and } \cap_{s=1}^S \left \{ 0 \leq \widehat{\mu}_s(\pi^\star) - \widehat{\mu}_s(\widehat{\pi}_s) \leq 2 C_s \right \}.
$$
\end{dhruvlemma}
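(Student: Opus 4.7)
The plan is to proceed by induction on $s \in \{1, 2, \dots, S\}$, carrying along the joint invariant that (i) $\pi^\star \in A_s$ and (ii) under $\cap_{s'=1}^{s} B_{s'}$ we have $0 \leq \widehat{\mu}_s(\pi^\star) - \widehat{\mu}_s(\widehat{\pi}_s) \leq 2 C_s$. The base case $\pi^\star \in A_1$ is immediate, since Algorithm~\ref{alg:main} initializes $A_1$ to be the set of all $\sqrt{T}$-cyclic policies, and $\pi^\star$ is by definition the $\mu$-minimizer over exactly this set.

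For the inductive step, suppose that $\pi^\star \in A_s$, and condition on $\cap_{s'=1}^{s} B_{s'}$. The lower bound $\widehat{\mu}_s(\pi^\star) - \widehat{\mu}_s(\widehat{\pi}_s) \geq 0$ is immediate from the definition $\widehat{\pi}_s \in \argmin_{\pi \in A_s} \widehat{\mu}_s(\pi)$ together with the fact that $\pi^\star \in A_s$. For the upper bound, I would apply the concentration event $B_s$ to both $\pi^\star$ and $\widehat{\pi}_s$ (both of which lie in $A_s$) to obtain
\begin{align*}
\widehat{\mu}_s(\pi^\star) - \widehat{\mu}_s(\widehat{\pi}_s) &\leq \bigl( \mu(\pi^\star) + C_s \bigr) - \bigl( \mu(\widehat{\pi}_s) - C_s \bigr) \\
&= \mu(\pi^\star) - \mu(\widehat{\pi}_s) + 2 C_s \\
&\leq 2 C_s,
\end{align*}
where the last inequality uses the defining property of $\pi^\star$ as the $\mu$-minimizer over the set $A_1$ of all $\sqrt{T}$-cyclic policies, which contains $\widehat{\pi}_s$. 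This establishes invariant (ii) at index $s$, and then the construction $A_{s+1} = \{ \pi \in A_s : \widehat{\mu}_s(\pi) \leq \widehat{\mu}_s(\widehat{\pi}_s) + 2 C_s \}$ on line 18 of Algorithm~\ref{alg:main} immediately gives $\pi^\star \in A_{s+1}$, advancing invariant (i).

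There is no serious obstacle here: the argument is a routine application of concentration plus the minimality of $\pi^\star$. The only point that requires a moment of care is that the quantity $\mu(\pi^\star) - \mu(\widehat{\pi}_s)$ must be nonpositive, which relies on the fact that $\pi^\star$ minimizes $\mu$ over the entire initial policy set $A_1$ rather than only over $A_s$; this is exactly why the lemma is phrased in terms of the non-adaptive optimum $\pi^\star$ defined prior to the algorithm, and not in terms of a per-epoch optimum.
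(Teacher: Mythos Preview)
Your proposal is correct and follows essentially the same approach as the paper: induction on $s$, with $\pi^\star \in A_1$ immediate from the definition, the lower bound from the minimality of $\widehat{\pi}_s$ over $A_s$, the upper bound from applying $B_s$ to both $\pi^\star$ and $\widehat{\pi}_s$ together with $\mu(\pi^\star) \leq \mu(\widehat{\pi}_s)$, and the construction of $A_{s+1}$ to propagate membership. The only cosmetic difference is that the paper indexes the induction so that the hypothesis at $s-1$ yields membership at $s$ and then the bound at $s$, whereas you carry membership at $s$ to the bound at $s$ and then to membership at $s+1$; these are equivalent.
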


\noindent The proofs of Lemma~\ref{lem:mu_conc} and Lemma~\ref{lem:pi_star_in_As} are provided in Appendix~\ref{app_proof:lem:mu_conc} and Appendix~\ref{app_proof:lem:pi_star_in_As} respectively. \\

\noindent Let us now return to the proof. By the result of Lemma~\ref{lem:mu_conc} and a union bound, the event $\cap_{s=1}^S B_s$ occurs with probability at least $1- \delta$. Furthermore, the result of Lemma~\ref{lem:pi_star_in_As} shows that the event $\cap_{s=1}^S B_s$ implies the event
\begin{equation}
\label{eqn:help3}
\pi^\star \in \cap_{s=1}^S A_s.
\end{equation}
So on the event $\cap_{s=1}^S B_s$, note that for any $s$ and any $\pi \in A_s$ we have
\begin{align*}
\mu(\pi) - \mu(\pi^\star) &\overset{(i)}{\leq} \widehat{\mu}_{s-1}(\pi) - \mu(\pi^\star) + C_{s-1} \\
&\overset{(ii)}{\leq} \widehat{\mu}_{s-1}(\widehat{\pi}_{s-1}) - \mu(\pi^\star) + 3C_{s-1} \\
&\overset{(iii)}{\leq} \widehat{\mu}_{s-1}(\pi^\star) - \mu(\pi^\star) + 3C_{s-1} \\
&\overset{(iv)}{\leq} \mu(\pi^\star) - \mu(\pi^\star) + 4C_{s-1} \\
&= 4 C_{s-1},
\end{align*}
where step $(i)$ follows from Lemma~\ref{lem:mu_conc}, step $(ii)$ follows from the definition of $A_s$ and the fact that $\pi \in A_s$, step $(iii)$ follows from the definition of $\widehat{\pi}_{s-1}$ and Eq.~\eqref{eqn:help3}, and step $(iv)$ follows again from Lemma~\ref{lem:mu_conc} and Eq.~\eqref{eqn:help3}. This completes the proof. \hfill \qeddhruv

\subsection{Proof of Lemma~\ref{lem:mu_conc}}
\label{app_proof:lem:mu_conc}
\noindent For the proof of this lemma, it is useful to define the quantity $C_{sxy}$ as
$$
C_{sxy} = \sqrt{\frac{32}{n_s N_{xy}(\pi_{sxy}) \sqrt{T}} \log \left( \frac{2 \numact m S}{\delta} \right) },
$$
for each $(s,x,y) \in \{ 1, 2 \dots S \} \times \ActSet \times \{ 1, 2 \dots m \}$. Fix any $(x, y) \in \ActSet \times \{ 1, 2 \dots m \}$. Note that by Hoeffding's bound~\citep{hoeffding63}, we are guaranteed that the event
\begin{equation}
\label{eqn:helper1}
\left \vert h_x(y) - \frac{1}{n_s N_{xy}(\pi_{sxy}) \sqrt{T}} \sum_{k=1}^{n_s N_{xy}(\pi_{sxy}) \sqrt{T}} \widetilde{h}_x(y)_{s,k} \right \vert \leq C_{sxy},
\end{equation}
occurs with probability at least $1 - \delta/(\numact m S)$. A union bound then ensures that the above event occurs simultaneously for all $(x, y) \in \ActSet \times \{ 1, 2 \dots m \}$ with probability at least $1 - \delta/S$. We now claim that this (simultaneous) event is a subset of $B$, which is sufficient to complete the proof. \\

\noindent To establish the claim, note that on this event, we are guaranteed for any $\pi \in A_s$ that 
\begin{align*}
&\left \vert \mu(\pi) - \widehat{\mu}_s(\pi) \right \vert \\
= &\left \vert \sum_{(x,y) \in \ActSet \times \{ 1, 2 \dots m \}} N_{xy}(\pi) h_x(y) - \sum_{(x,y) \in \ActSet \times \{ 1, 2 \dots m \}} N_{xy}(\pi) \frac{1}{n_s N_{xy}(\pi_{sxy}) \sqrt{T}} \sum_{k=1}^{n_s N_{xy}(\pi_{sxy}) \sqrt{T}} \widetilde{h}_x(y)_{s,k} \right \vert \\
= &\left \vert \sum_{(x,y) \in \ActSet \times \{ 1, 2 \dots m \}} N_{xy}(\pi) \left( h_x(y) - \frac{1}{n_s N_{xy}(\pi_{sxy}) \sqrt{T}} \sum_{k=1}^{n_s N_{xy}(\pi_{sxy}) \sqrt{T}} \widetilde{h}_x(y)_{s,k} \right) \right \vert \\
\leq &\sum_{(x,y) \in \ActSet \times \{ 1, 2 \dots m \}} N_{xy}(\pi) C_{sxy},
\end{align*}
where the final step follows from the triangle inequality and Eq.~\eqref{eqn:helper1}. Continuing the above, we have that
\begin{align*}
\left \vert \mu(\pi) - \widehat{\mu}_s(\pi) \right \vert &\leq \sum_{(x,y) \in \ActSet \times \{ 1, 2 \dots m \}} N_{xy}(\pi) C_{sxy} \\
&= \sqrt{\frac{32}{n_s \sqrt{T}} \log \left( \frac{2 \numact m S}{\delta} \right) } \sum_{(x,y) \in \ActSet \times \{ 1, 2 \dots m \}} \frac{N_{xy}(\pi)}{\sqrt{N_{xy}(\pi_{sxy})}} \\
&\overset{(i)}{\leq} \sqrt{\frac{32}{n_s \sqrt{T}} \log \left( \frac{2 \numact m S}{\delta} \right) } \sum_{(x,y) \in \ActSet \times \{ 1, 2 \dots m \}} \sqrt{N_{xy}(\pi)} \\
&\overset{(ii)}{\leq} \sqrt{\numact m} \sqrt{\frac{32}{n_s \sqrt{T}} \log \left( \frac{2 \numact m S}{\delta} \right) } \\
&= C_s,
\end{align*}
where step $(i)$ follows from the fact that $N_{xy}(\pi_{sxy}) \geq N_{xy}(\pi)$ by its definition in Algorithm~\ref{alg:main}, and step $(ii)$ follows from the fact that $N_{xy}(\pi) \in [0,1]$, that $\sum_{(x,y) \in \ActSet \times \{ 1, 2 \dots m \}} N_{xy}(\pi) = 1$ as well as the Cauchy-Schwarz inequality. This establishes the claim and hence completes the proof. \hfill \qeddhruv

\subsection{Proof of Lemma~\ref{lem:pi_star_in_As}}
\label{app_proof:lem:pi_star_in_As}
\noindent Assume that the event $\cap_{s'=1}^S B_{s'}$ is true. On this event, we prove the lemma by induction on $s$. First we demonstrate the base case of $s=1$, which is that $\pi^\star \in A_1$ and $0 \leq \widehat{\mu}_1(\pi^\star) - \widehat{\mu}_1(\widehat{\pi}_1) \leq 2 C_1$. Then for the inductive step we show that if the event $\pi^\star \in A_{s-1}$ and $0 \leq \widehat{\mu}_{s-1}(\pi^\star) - \widehat{\mu}_{s-1}(\widehat{\pi}_{s-1}) \leq 2 C_{s-1}$ occurs, then we also have that the event 
$$
\pi^\star \in A_s \text{ and } 0 \leq \widehat{\mu}_{s}(\pi^\star) - \widehat{\mu}_{s}(\widehat{\pi}_{s}) \leq 2 C_{s},
$$
is also true. \\

\noindent For the base case, note that by definition we are guaranteed $\pi^\star \in A_1$. And by the definition of $\widehat{\pi}_1$, we know that $0 \leq \widehat{\mu}_1(\pi^\star) - \widehat{\mu}_1(\widehat{\pi}_1)$. Furthermore, recalling the definition of the event $B_1$ in Lemma~\ref{lem:mu_conc}, on the event $B_1$ we have that
$$
\widehat{\mu}_1(\pi^\star) - \mu(\pi^\star) \leq C_1 \text{ and } \mu(\widehat{\pi}_1) - \widehat{\mu}_1(\widehat{\pi}_1) \leq C_1.
$$
Putting these equations together and using the fact that $\mu(\pi^\star) \leq \mu(\widehat{\pi}_1)$ ensures that
$$
\widehat{\mu}_1(\pi^\star) - \widehat{\mu}_1(\widehat{\pi}_1) \leq 2C_1.
$$
This verifies the base case. \\

\noindent For the inductive step, assume that $\pi^\star \in A_{s-1}$ and $0 \leq \widehat{\mu}_{s-1}(\pi^\star) - \widehat{\mu}_{s-1}(\widehat{\pi}_{s-1}) \leq 2 C_{s-1}$ occurs. Then the definition of $A_s$ and the inductive hypothesis directly imply that $\pi^\star \in A_s$. Hence, it is true by definition of $\widehat{\pi}_s$ that $0 \leq \widehat{\mu}_s(\pi^\star) - \widehat{\mu}_s(\widehat{\pi}_s)$. Then recalling the definition of the event $B_s$ in Lemma~\ref{lem:mu_conc}, on the event $B_s$ we have that
$$
\widehat{\mu}_s(\pi^\star) - \mu(\pi^\star) \leq C_s \text{ and } \mu(\widehat{\pi}_s) - \widehat{\mu}_s(\widehat{\pi}_s) \leq C_s.
$$
Putting these equations together and using the fact that $\mu(\pi^\star) \leq \mu(\widehat{\pi}_s)$ ensures that
$$
\widehat{\mu}_s(\pi^\star) - \widehat{\mu}_s(\widehat{\pi}_s) \leq 2C_s.
$$
This verifies the inductive step. As argued earlier, this is sufficient to complete the proof. \hfill \qeddhruv

\subsection{Proof of Lemma~\ref{lem:cyclic_approx_well}}
\label{app_proof:lem:cyclic_approx_well}
Note that since $\mu(\pi^\star) \in [0,1]$, the statement is trivial for $m \geq \sqrt{T}$. Hence, assume for the remainder of the proof that $m < \sqrt{T}$. There exists some $k \in \{ 0, 1 \dots \sqrt{T} - 1 \}$ such that
$$
\sum_{t=k \sqrt{T} + 1}^{(k+1)\sqrt{T}} \ell_t(\pi^{\star \star}) \leq \frac{1}{\sqrt{T}} \sum_{t=1}^T \ell_t(\pi^{\star \star}).
$$
Define the $\sqrt{T}$-cyclic policy $\pi$ by letting $\pi_t = \pi^{\star \star}_{k \sqrt{T} + t}$ for all $1 \leq t \leq \sqrt{T}$. Note that via the MDP characterization provided in Lemma~\ref{lem:bandit_mdp}, we can equivalently think of the tallying bandit problem as some MDP that we denote $M$. The proof of Lemma~\ref{lem:bandit_mdp} shows that regardless of the state we start at, either playing $\pi_1, \pi_2 \dots \pi_m$ or playing $\pi^{\star \star}_{k \sqrt{T} + 1}, \pi^{\star \star}_{k \sqrt{T} + 2} \dots \pi^{\star \star}_{k \sqrt{T} + m}$ leads to the same state in $M$. From that state, either playing $\pi_{m+1}, \pi_{m+2} \dots \pi_{\sqrt{T}}$ or playing $\pi^{\star \star}_{k \sqrt{T} + m+1}, \pi^{\star \star}_{k \sqrt{T} + m+2} \dots \pi^{\star \star}_{(k+1) \sqrt{T}}$ leads to the same sequence of states, and hence the same sequence of (expected) losses. Hence we have shown that after a single first execution of $\pi$, we can bound the loss as
$$
\sum_{t=1}^{\sqrt{T}} \ell_k(\pi) \leq m + \sum_{t=k \sqrt{T} + 1}^{(k+1)\sqrt{T}} \ell_k(\pi^{\star \star}) \leq m + \frac{1}{\sqrt{T}} \sum_{t=1}^T \ell_t(\pi^{\star \star}).
$$
Repeating this argument for $\sqrt{T}$ executions of $\pi$, we have shown that
\begin{equation}
\label{eqn:help2}
\sum_{t=1}^{T} \ell_k(\pi) \leq m\sqrt{T} + \sqrt{T} \sum_{t=k \sqrt{T} + 1}^{(k+1)\sqrt{T}} \ell_k(\pi^{\star \star}) \leq m \sqrt{T} + \sum_{t=1}^T \ell_t(\pi^{\star \star}).
\end{equation}
Now, we observe that $T \mu(\pi) \leq \sum_{t=1}^{T} \ell_k(\pi) + \sqrt{T}$, where we used the fact that when $m \leq \sqrt{T}$, if we play an arbitrary action sequence and then execute $\pi$ for $n+1$ periods for $n \geq 1$, then the (expected) cumulative loss experienced in the final period (i.e., the $(n+1)$th period) is $\mu(\pi)\sqrt{T}$. Finally, we note that $\mu(\pi^\star) \leq \mu(\pi)$ by definition, and so we have that
$$
T \mu(\pi^\star) \leq T \mu(\pi) \leq \sum_{t=1}^{T} \ell_k(\pi) + \sqrt{T},
$$
which combined with Eq.~\eqref{eqn:help2} implies that
$$
T \mu(\pi^\star) - \sum_{t=1}^T \ell_t(\pi^{\star \star}) \leq (m+1) \sqrt{T}.
$$
This completes the proof. \hfill \qeddhruv

\subsection{Proof of Lemma~\ref{lem:N_welldef}}
\label{app_proof:lem:N_welldef}
To prove this result, we leverage the MDP characterization of tallying bandits provided by Lemma~\ref{lem:bandit_mdp}. Let $M$ denote the MDP corresponding to the given $(m,g,h)$-tallying bandit problem. Note that by the proof of Lemma~\ref{lem:bandit_mdp}, and by the assumption that $m \leq \sqrt{T}$, after executing $\pi$ for $n \geq 1$ periods we have arrived at the state $(\pi_{n \sqrt{T} - m + 1} \dots \pi_{n \sqrt{T}})$. And since $\pi$ is $\sqrt{T}$-cyclic we are guaranteed that
$$
(\pi_{n \sqrt{T} - m + 1} \dots \pi_{n \sqrt{T}}) = (\pi_{\sqrt{T} - m + 1} \dots \pi_{\sqrt{T}}).
$$
So if we execute $\pi$ for one more period (i.e., the $(n+1)$th period) from this starting state, then regardless of $n$ we will observe an identical sequence of states, since the transition function of the MDP $M$ is deterministic. Hence, the quantity
$$
\sum_{t=n \sqrt{T} + 1}^{(n+1)\sqrt{T}} \mathbb{I}(\pi_t = x) \cdot \mathbb{I} \left( y = \sum_{t'=\max \{ 1, t-m+1 \}}^t \mathbb{I}(\pi_{t'} = x) \right),
$$
used to define $N_{xy}(\pi)$ is independent of $n$, ensuring that $N_{xy}(\pi)$ is well defined. It remains to establish the claim that $N_{xy}(\pi)$ is independent of the action sequence that was played before $\pi$ was executed for $n+1$ periods. Denote this prior action sequence as $a_{1:k}$ for any finite value of $k$. Note that regardless of what $a_{1:k}$ is, after we execute $\pi$ for $n$ periods we still arrive at the state
$$
(\pi_{n \sqrt{T} - m + 1} \dots \pi_{n \sqrt{T}}) = (\pi_{\sqrt{T} - m + 1} \dots \pi_{\sqrt{T}}),
$$
in the MDP $M$. Then if we execute $\pi$ for one more period (i.e., the $(n+1)$th period) from this starting state, then regardless of $a_{1:k}$ we will observe an identical sequence of states, since the transition function of the MDP $M$ is deterministic. This ensures that $N_{xy}(\pi)$ is well defined. \hfill \qeddhruv

\subsection{Proof of Lemma~\ref{lem:bandit_mdp}}
\label{app_proof:lem:bandit_mdp}
\noindent Given an $(m,g,h)$-tallying bandit problem, for ease in notation let the (finite) action set $\ActSet$ be denoted as $\{ 1, 2 \dots \numact \}$. To define the MDP $M$, let its state space be $(\{ 0 \} \cup \ActSet)^m$ and let its action space be $\ActSet$. Let the initial state be the length $m$ vector $(0, 0 \dots 0)$. For each state $s = (s_1, s_2 \dots s_m)$ and action $i$, define the deterministic transition function $\mathcal{T}_M: (\{ 0 \} \cup \ActSet)^m \times \ActSet \to (\{ 0 \} \cup \ActSet)^m$ of the MDP as
$$
\mathcal{T}_M( s, i ) = (s_2, s_2 \dots s_{m-1}, s_m, i).
$$
Let the (finite) horizon of the MDP be the time horizon $T$ of the tallying bandit problem. Finally, define for each state $s$ the (expected) reward function $R_M: (\{ 0 \} \cup \ActSet)^m \to [0,1]$ of the MDP as
$$
R_M(s) = 1 - h_{s_m} \left( \sum_{t=1}^m \mathbb{I}(s_{t} = s_m) \right).
$$
It is immediate the taking actions in the tallying bandit problem corresponds to taking actions in the state space of this MDP. \hfill \qeddhruv

\section{Proof of Theorem~\ref{thm:lower}}
\label{app:lower_bound_proof}
First note that as an immediate consequence of Proposition~\ref{prop1}, we must have $\E \left[ \cpReg \right] \geq  m \numact / 128$. So for the remainder of the proof, we focus on showing that $\E \left[ \cpReg \right] \geq c \sqrt{m \numact T}$ for some numerical constant $c > 0$. Also note that due to the result of Proposition~\ref{prop1}, we can assume for this proof that $m \leq T / 100$, because the complete policy regret scales linearly with $T$ in the regime that $m > T/100$. \\

\noindent At a high level, our proof will proceed via a reduction to best arm identification in stochastic multi armed bandit problems~\citep{slivkins19}. Roughly speaking, we will show the existence of an $(m,g,h)$-tallying bandit problem, such that minimizing the complete policy regret in this problem is at least as hard as identifying the best arm in a stochastic multi armed bandit problem with $\bigtheta \left( m \numact \right)$ arms. \\

\noindent To this end, we first recall Lemma~\ref{lem:bandit_mdp}, which was originally stated at the beginning of Appendix~\ref{app:upper} and proved in Appendix~\ref{app_proof:lem:bandit_mdp}. We have restated it here for convenience since it shall be useful for our proof of Proposition~\ref{prop1}.

\mdplem*

\noindent Now, construct an $(m,g,h)$-tallying bandit problem using the following procedure, where we assume that $m$ is at least some sufficiently large universal constant. Sample $(x^\star, y^\star)$ uniformly at random from $\ActSet \times \{ 23 m/24, 23 m/24 + 1 \dots m \}$. Define $h_x(y) = 1/2$ for each $(x,y) \in \ActSet \times \{ 1, 2 \dots m \}$ such that $(x,y) \neq (x^\star, y^\star)$. Also define $h_{x^\star}(y^\star) = 1/2 - \epsilon$ for some $\epsilon \in (0,1)$ to be specified later. \\

\noindent We now define the stochastic bandit feedback model for this tallying bandit problem as follows. When the player plays action $x$, and this action $x$ has been played a total of $y$ times in the past $m$ timesteps (including the current timestep), then the player receives as feedback a Bernoulli random variable with mean $h_x(y)$. It is immediate that this feedback model meets the criteria outlined in Definition~\ref{def:tallying_bandit}. \\

\noindent Let us now upper bound the cumulative loss incurred by the optimal policy in this tallying bandit problem. To do so, consider the policy $\pi$, which is a length $T$ sequence of actions, that we define as follows. Choose some $x \in \ActSet$ such that $x \neq x^\star$. We define $\pi$ to choose action $x^\star$ for $y^\star$ timesteps and then choose action $x$ for $m - y^\star$ timesteps, and then repeat this length $m$ sequence over and over. Recalling the definition of a $\sqrt{T}$-cyclic policy that was stated in Section~\ref{sec:results_upper_bound}, we can analogously say that $\pi$ is an $m$-cyclic policy, such that within each period of length $m$ it plays $x^\star$ for $y^\star$ times and then plays $x$ for $m - y^\star$ times. \\

\noindent By Lemma~\ref{lem:bandit_mdp}, there exists an MDP $M$ that is equivalent to the constructed tallying bandit problem. Recalling the characterization of this MDP $M$ provided in the proof of Lemma~\ref{lem:bandit_mdp}, let us understand the sequence of states in $M$ that we arrive at when we follow $\pi$. In the first $m$ timesteps, $\pi$ plays $x^\star$ for $y^\star$ times and then plays $x$ for $m - y^\star$ times, and so we arrive at the state
\begin{equation}
\label{eqn:thm_lower_help1}
(x^\star, x^\star \dots x^\star, x, x \dots x) \equiv (x^\star)^{y^\star} \times x^{m - y^\star}.
\end{equation}
Then for the next $y^\star$ timesteps, $\pi$ plays $x^\star$ repeatedly. This leads to the progression of states given by
\begin{equation}
\label{eqn:thm_lower_help2}
\begin{gathered}
(x^\star)^{y^\star - 1} \times x^{m - y^\star} \times (x^\star) \\
(x^\star)^{y^\star - 2} \times x^{m - y^\star} \times (x^\star)^2 \\
(x^\star)^{y^\star - 3} \times x^{m - y^\star} \times (x^\star)^3 \\
\vdots \\
(x^\star) \times x^{m - y^\star} \times (x^\star)^{y^\star - 1} \\
x^{m - y^\star} \times (x^\star)^{y^\star}.
\end{gathered}
\end{equation}
Then for the next $m - y^\star$ timesteps, it plays $x$, so that the state after these timesteps is
$$
(x^\star, x^\star \dots x^\star, x, x \dots x) \equiv (x^\star)^{y^\star} \times x^{m - y^\star},
$$
and we have arrived back at the state listed in Eq.~\eqref{eqn:thm_lower_help1}. \\

\noindent Let us now use this insight to bound the expected loss incurred by following $\pi$. Critically, for every period of $m$ timesteps after the very first period, Eq.~\eqref{eqn:thm_lower_help2} shows that we observe (stochastic instantiations of) the loss value $h_{x^\star}(y^\star)$ for exactly $y^\star \in \{ 23m/24, 23m/24 + 1 \dots m \}$ timesteps. We hence upper bound the expected cumulative loss incurred by $\pi$ as
\begin{equation}
\label{eqn:thm_lower_help3}
\begin{aligned}
(1/2) m + ( (1/2 - \epsilon) y^\star + (1/2)(m - y^\star) )\frac{T - m}{m} &= m/2 + ( -\epsilon y^\star + (1/2) m )\frac{T - m}{m} \\
&= -\epsilon y^\star \frac{T - m}{m} + \frac{T-m}{2} + m/2 \\
&\leq -\epsilon \frac{23m}{24} \frac{T - m}{m} + T/2 \\
&= -\epsilon \frac{23(T - m)}{24} + T/2 \\
&= T/2 - \epsilon 23T/24 + 23 m \epsilon/24.
\end{aligned}
\end{equation}
\noindent Let us now consider the performance of any algorithm which attempts to solve this tallying bandit problem. Instead of the algorithm operating in the usual oracle model, where it might need $m$ actions to arrive at a state that it deems beneficial, let us strengthen the algorithm by equipping it with a generative model. Concretely, we strengthen the algorithm so that at any timestep, it can query any state in $M$ and receive a stochastic instantiation of the $h_x(y)$ loss value corresponding to that state. Given this generative model, it is immediate that an algorithm that is attempting to minimize its cumulative loss (or equivalently, minimize its complete policy regret), is attempting to maximize the number of times it queries states whose loss value is $h_{x^\star}(y^\star)$. \\

\noindent Hence, we can interpret this algorithm as running best arm identification on a classical stochastic multi armed bandit problem with $m \numact /24$ arms. Note that the number of arms in this stochastic multi armed bandit problem is $m \numact / 24$, since $(x^\star, y^\star)$ was sampled uniformly at random from a set of cardinality $m \numact / 24$. The remainder of the argument closely follows that of Slivkins~\citep{slivkins19}. So pick $\epsilon = \sqrt{c m \numact / T}$, for some numerical constant $c > 0$ whose precise value can be found in the proofs of Corollary 2.9 and Theorem 2.10 of Slivkins~\citep{slivkins19}. These two results show that for each timestep less than or equal to $T$, with probability at least $1/12$ the algorithm does not select a state whose loss value is $h_{x^\star}(y^\star)$. In particular, this means that the expected cumulative loss of this algorithm is at least
\begin{equation}
\label{eqn:thm_lower_help4}
(1/12) T/2 + (11/12) T (1/2 - \epsilon) = T/24 + 11 T/24 - 11 \epsilon T /12 = T/2 - 11\epsilon T/12.
\end{equation}
Putting together Eq.~\eqref{eqn:thm_lower_help3} and Eq.~\eqref{eqn:thm_lower_help4}, we hence have that the expected complete policy regret of this algorithm is lower bounded as
\begin{align*}
\E \left[ \cpReg \right] &\geq T/2 - 11\epsilon T/12 - \left( T/2 - \epsilon 23T/24 + 23 m \epsilon/24 \right) \\
&= - 11 \epsilon T/12  + \epsilon 23T/24 - 23 m \epsilon/24 \\
&= \epsilon T/24 - 23 m \epsilon/24 \\
&\geq c \sqrt{m \numact T},
\end{align*}
where in the final step we used our assumption that $m \leq T / 100$, substituted in the definition of $\epsilon = \sqrt{c m \numact / T}$ and redefined the value of the numerical constant $c$. This completes the proof. \hfill \qeddhruv

\section{Proof of Proposition~\ref{prop1}}
\label{app:proposition1}
First, we recall Lemma~\ref{lem:bandit_mdp}, which was originally stated at the beginning of Appendix~\ref{app:upper} and proved in Appendix~\ref{app_proof:lem:bandit_mdp}. We have restated it here for convenience since it shall be useful for our proof of Proposition~\ref{prop1}.

\mdplem*

\noindent We now formally define the algorithm $\mathbb{ALG}_{\text{det}}$ below in Algorithm~\ref{alg:alg_det}. Subsequently, we prove the upper bound in Proposition~\ref{prop1}, and then we prove the lower bound in Proposition~\ref{prop1}.

\begin{algorithm}[hbt!]
\caption{$\mathbb{ALG}_{\text{det}}$}
\label{alg:alg_det}
\begin{algorithmic}[1]
\Require memory capacity $m$, time horizon $T$
\For{$x \in \ActSet$}
		\For{$y \in \{ 1, 2 \dots m \}$}
			\State Choose action $x$.
			\State Observe and store $h_x(y)$.
\EndFor
\EndFor
	\State Plan (offline) an optimal policy $\overline{\pi} = \{ \overline{\pi}_k \}_{k = m \numact + 1}^{T}$ to play for remaining $T - m \numact$ timesteps.
	\State Choose actions according to $\overline{\pi}$ for remaining $T - m \numact$ timesteps.
\end{algorithmic}
\end{algorithm}

\subsection{Proof of Upper Bound in Proposition~\ref{prop1}}
\noindent First note that the offline planning step in Algorithm~\ref{alg:alg_det} is statistically (although perhaps not computationally) feasible, since the player has full information about the loss functions once it stores $h_x(y)$ for each $(x,y) \in \ActSet \times \{ 1, 2 \dots m \}$. \\

\noindent Let $\pi^{\star \star}$ denote the optimal policy for the problem, so that $\pi^{\star \star} \in \ActSet^T$ is a length $T$ sequence of actions such that following this sequence achieves the minimum loss. Let $\ell_t(\pi^{\star \star})$ denote the loss incurred at the $t$th timestep while playing the action sequence $\{ \pi^{\star \star}_t \}_{t=1}^{T}$. Let $\ell(\overline{\pi})$ denote the total loss incurred by the final step of Algorithm~\ref{alg:alg_det} which starts playing $\overline{\pi}$ at timestep $m \numact + 1$.  \\

\noindent To complete the proof, we will leverage the MDP characterization of tallying bandit problems that was established in the proof of Lemma~\ref{lem:bandit_mdp}. Let $M$ denote the equivalent MDP for this tallying bandit problem. Observe that $\overline{\pi}$ is the optimal policy for the remaining $T - m \numact$ timesteps assuming that $\overline{\pi}$ is initialized at the initial state $s = (\numact, \numact \dots \numact)$ in this MDP $M$. Also note that playing the action sequence $\{ \pi^{\star \star}_t \}_{t=m \numact + 1}^T$ from the state $s$ leads to a cumulative loss that is upper bounded by
$$
m + \sum_{t= m (\numact+1) + 1}^T \ell_t(\pi^{\star \star}),
$$
which follows because regardless of the initial state, we arrive at state $s' = (\pi^{\star \star}_{m \numact + 1}, \pi^{\star \star}_{m \numact + 2} \dots \pi^{\star \star}_{m (\numact+1)} )$ in $M$ after playing the length $m$ action sequence $\{ \pi^{\star \star}_t \}_{t=m \numact + 1}^{m (\numact + 1)}$. From state $s'$, the losses we experience when playing $\{ \pi^{\star \star}_t \}_{t=m (\numact+1) + 1}^T$ at each timestep $t \geq m (\numact+1) + 1$ are exactly $\ell_t(\pi^{\star \star})$. The optimality of $\overline{\pi}$ hence implies that
\begin{equation}
\label{eqn:help_prop_upper}
\ell(\overline{\pi}) \leq m + \sum_{t= m (\numact+1) + 1}^T \ell_t(\pi^{\star \star}).
\end{equation}
Hence, by naively upper bounding the loss incurred in the first $m \numact$ timesteps, the complete policy regret of Algorithm~\ref{alg:alg_det} is upper bounded via Eq.~\eqref{eqn:help_prop_upper} as
\begin{align*}
\cpReg &\leq m \numact + \ell(\overline{\pi}) - \sum_{t= m \numact + 1}^T \ell_t(\pi^{\star \star}) \\
&\leq m \numact + m + \sum_{t= m (\numact+1) + 1}^T \ell_t(\pi^{\star \star}) - \sum_{t= m \numact + 1}^T \ell_t(\pi^{\star \star}) \\
&= m \numact + m - \sum_{t= m \numact + 1}^{m (\numact + 1)} \ell_t(\pi^{\star \star}) \\
&\leq (m+1) \numact.
\end{align*}
This verifies the upper bound in Proposition~\ref{prop1}. \hfill \qeddhruv

\subsection{Proof of Lower Bound in Proposition~\ref{prop1}}
Construct an $(m,g,h)$-tallying bandit problem via the following procedure. Sample an action $x^\star$ uniformly at random from $\ActSet$, and keep its identity hidden from the user. Define the functions $\{ h_x \}_{x \in \ActSet}$ as
$$
h_{x^\star}(y) = \begin{cases} 1 \text{ if } y < m \\ 0 \text{ if } y = m \end{cases} \text{ and } h_x = 1 \text{ if } x \neq x^\star.
$$
It is immediate that the optimal policy always plays the action $x^\star$, and its cumulative loss is precisely $m-1$. Meanwhile, to obtain zero loss at any timestep, the player must identify $x^\star$. Note that to identify whether a certain action $x$ equals $x^\star$, the player must play $x$ for $m$ consecutive times, in order to receive the feedback $h_x(m)$. Since there are $\numact$ actions, and identifying whether an action is the correct one requires $m$ queries, a standard counting argument~\citep{du20lowerbound} reveals that (in expectation over a possibly randomized strategy) the player makes at least $m \numact / 64$ queries before observing $h_{x^\star}(m)$. Hence, the (expected) complete policy regret of the (possibly randomized) player is lower bounded as
$$
\E \left[ \cpReg \right] \geq m \numact / 64 - (m-1) \geq m \numact / 128,
$$
where we assume that $\numact$ is larger than some numerical constant. This verifies the lower bound in Proposition~\ref{prop1}. \hfill \qeddhruv

\bibliographystyle{alpha}
\bibliography{colt22v2refs}

\end{document}